\newsavebox{\mygridbox}
\DeclareMathOperator{\KL}{\mathsf{KL}}
\DeclareMathOperator{\rP}{\mathbb{P}}
\DeclareMathOperator{\Id}{\mathrm{Id}}
\def\norm#1{\left\lVert#1\right\rVert}
\def\curly#1{\left\{#1\right\}}
\def\square#1{\left[#1\right]}
\newcommand{\expectation}{\mathbb{E}}
\newcommand{\cD}{\mathcal{D}}
\newcommand{\cN}{\mathcal{N}}
\newcommand{\ind}{\mathbbm{1}}
\newcommand{\innerproduct}[2]{\left\langle #1, #2 \right\rangle}
\DeclarePairedDelimiterX{\infdiv}[2]{(}{)}{%
  #1\;\delimsize\|\;#2%
}
\title{Adaptive Diffusion Guidance via Stochastic Optimal Control}
\author{%
   Iskander Azangulov\\
  Department of Statistics\\
  University of Oxford\\
  \texttt{iskander.azangulov@stats.ox.ac.uk} \\
  \And
  Peter Potaptchik \\
  Department of Statistics\\
  University of Oxford\\
  \texttt{peter.potaptchik@stats.ox.ac.uk} \\
  \And
  Qinyu Li\\
  Department of Statistics\\
  University of Oxford\\
  \texttt{qinyu.li@stats.ox.ac.uk} \\
  \And
  Eddie Aamari \\
  Département de mathématiques et applications \\
  École normale supérieure, Université PSL, CNRS \\
  \texttt{eddie.aamari@ens.fr} \\
  \And
  George Deligiannidis \\
  Department of Statistics\\
  University of Oxford\\
  \texttt{george.deligiannidis@stats.ox.ac.uk} \\
  \And
  Judith Rousseau \\
  CEREMADE, Universit\'e Paris-Dauphine,\\
   PSL University,  CNRS\\
  \texttt{rousseau@ceremade.dauphine.fr} \\
}
\begin{document}

\maketitle

\begin{abstract}
  Guidance is a cornerstone of modern diffusion models, playing a pivotal role in conditional generation and enhancing the quality of unconditional samples. However, current approaches to guidance scheduling—determining the appropriate guidance weight—are largely heuristic and lack a solid theoretical foundation. This work addresses these limitations on two fronts. First, we provide a theoretical formalization that precisely characterizes the relationship between guidance strength and classifier confidence. Second, building on this insight, we introduce a stochastic optimal control framework that casts guidance scheduling as an adaptive optimization problem. In this formulation, guidance strength is not fixed but dynamically selected based on time, the current sample, and the conditioning class, either independently or in combination. By solving the resulting control problem, we establish a principled foundation for more effective guidance in diffusion models.
\end{abstract}
\section{Introduction}
Diffusion models \citep{sohl2015deep, ho2020denoising, song2021scorebased} have emerged as a dominant paradigm in generative modeling, achieving state-of-the-art results across various domains, including text-to-image generation \citep{rombach2022high, ramesh2022hierarchical, Zhang_2023_ICCV}, video synthesis \citep{ho2022video, singer2022make, blattmann2023stable}, and molecular design~\citep{bao2022equivariant, weiss2023guided, schneuing2024structure}. These models operate by defining a forward diffusion process $X_t$ that gradually adds noise to data, $X_0 \sim p$, and then generating samples by learning to reverse this noising process. The backward process is driven by the score function $\nabla \log p_t$, where $p_t$ is the density of $X_t$. In practice, the score is approximated by a neural network trained via score matching \citep{song2021scorebased}.

While incredibly powerful, the raw output from these diffusion models often benefits from refinement to enhance sample quality and ensure alignment with specific desired conditions or characteristics. Such tasks are widespread, from generating images based on text to steering protein generation to meet specific constraints. \emph{Guidance} \citep{dhariwal2021diffusion, ho2022classifier} techniques have, therefore, become an indispensable part of generative modeling, ubiquitously employed across a variety of tasks \citep{ramesh2022hierarchical, saharia2022photorealistic, schiff2025simpleguidancemechanismsdiscrete}. 

Among guidance techniques, Classifier-Free Guidance (CFG) \citep{ho2022classifier} has become a cornerstone, significantly improving conditional generation.
Let $c$ be a conditioning input such as a class label or a textual prompt.
CFG involves training a single diffusion model on complementary objectives to simultaneously learn the conditional scores $\grad \log p(x|c)$ for different conditions $c$ and the unconditional score $\grad \log p(x)$. During sampling, CFG modifies the standard reverse diffusion process by introducing a guidance strength parameter $w \in \R$. This parameter induces a guided score $\nabla \log p_t(x|c) + w(\nabla \log p_t(x|c) - \nabla \log p_t(x))$, which interpolates between the conditional $(w=0)$ and the unconditional $(w=-1)$ scores. 

In practice, the guidance strength $w$ is typically chosen to be much greater than 0 to amplify the influence of the conditioning information. For example, in image generation, $w$ commonly ranges from 3 to 16 \citep{ho2022classifier, podell2023sdxl, Li_2024_CVPR, stable-diffusion-manual}, though this is just a heuristic choice and the theoretical understanding of the guiding mechanism is still limited. We discuss this in more detail in the next section.

\subsection{Related Works}\label{sec:related_works}

While choosing constant guidance is standard practice, recent works have explored various heuristics for improving guidance via non-constant weight schedules. For instance, \citet{chang2023muse, gao2023mdtv2} proposed schedules where guidance strength increases with the current noise level. \citet{kynkaanniemi2024applying} proposed only applying guidance at empirically specified time intervals. Additionally, \citet{shen2024rethinkingspatialinconsistencyclassifierfree} proposed using different guidance weights across different semantic regions, arguing that a fixed guidance weight results in spatial inconsistencies.

From a theoretical point of view, the understanding of the guidance mechanism is still limited. A common intuition is that guidance with weight $w$ effectively samples from a "tilted" target distribution proportional to $p(x)p(c|x)^w$, however, as shown in \citet{chidambaram2024does, bradley2024classifier}, this is not true. Another point of ambiguity, and a widely held misconception, is whether guided sampling ($w \neq 0$) ensures that samples remain within the true conditional data support, with some believing that guidance may prevent the recovery of this support. This concern about losing the data support is, interestingly, contradictory to the aforementioned "tilted" distribution intuition. 

Recently, \citet{skreta2025feynman} proposed a Feyman-Kac based corrector that runs a Sequential Monte Carlo scheme along guided trajectories to sample exactly from $\propto p(x)p(c|x)^w$. Also, \citet{domingoenrich2025adjointmatchingfinetuningflow} suggested a Stochastic Optimal Control based method to train a guiding vector, so that the resulting samples are from $\propto p(x)\exp(r(x))$ for an arbitrary terminal reward $r(x)$.

\subsection{Our Contributions}

This work addresses these limitations on two main fronts. First, we address a critical gap in the current understanding of guidance: the ambiguity of the precise distribution being sampled, and a lack of formal guarantees when $w \neq 0$. 
Our work establishes that (i) generated samples are guaranteed to remain within the conditional data manifold; and (ii) applying guidance with any positive strength $w>0$ increases $p(c|x)$ both with high probability and on average.

Second, building on these theoretical insights, we introduce a novel framework that recasts guidance scheduling as a Stochastic Optimal Control (SOC) problem.  In this framework, the guidance strength is no longer a fixed parameter, but is dynamically adjusted based on time, the current sample state, and the conditioning class $c$. By formulating the resulting control problem, we establish a principled foundation for effective adaptive guidance. The resulting framework optimizes a scalar-valued function $w=w_t(x,c)$ and only requires estimates of $\grad \log p_t(x)$ and $\grad \log p_t(x|c)$, making it applicable in the CFG setting.

Finally, we propose and implement a scalable numerical method based on the adjoint method to efficiently solve this control problem. The open-source code for our framework is publicly available at \href{https://github.com/imbirik/adaptive-guidance}{github.com/imbirik/adaptive-guidance}.

\paragraph{Concurrent work.} While we were finalizing our paper, a concurrent work~\citep{li2025provableefficiencyguidancediffusion} appeared, presenting a result similar to our~\cref{cor:guarantees}. Using a different approach, they also show that applying guidance $w>0$ reduces $\E p^{-1}(c|Y^w_t)$ compared to the unguided case $w\equiv 0$.

\section{Background}
\subsection{Diffusion Models via Stochastic Differential Equations}
Diffusion models define a generative process by reversing a forward stochastic differential equation (SDE) that progressively corrupts data into noise. Let $X_0 \sim p_0 = p$ denote the data distribution in $\R^D$. The forward (noising) process is governed by the Ornstein–Uhlenbeck SDE:
\begin{equation}
\label{eq:forward}
    dX_t = - X_tdt + \sqrt{2}dB_t, \quad t \in [0, T],
\end{equation}
where $B_t$ is a standard Wiener process. We denote the density of $X_t$ by $p_t$ and remark that
\begin{equation}
    X_t |X_0 := \mathcal{N}\left( e^{-t}X_0, (1-e^{-2t}) \Id_D \right).
\end{equation}
Under mild assumptions~\citep{ANDERSON1982313}, the reverse-time process $\curly{Y_t}_{t\in [0,T]}:=\curly{X_{T-t}}_{t\in [0,T]}$ satisfies the SDE
\begin{equation}
    dY_t = \left[ Y_t + 2\nabla\log p_{T-t}(Y_t) \right]dt + \sqrt{2}d\bar{B}_t,
\end{equation}
where $\bar{B}_t$ is a standard Wiener process in the reverse-time direction.
In practice, $\nabla \log p_t$ is approximated by a neural network trained via score matching and sampling $Y_T \sim p_T$ is typically approximated by $Y_T \sim \mathcal{N}(0,I_D)$. We refer to~\cite{karras2022elucidatingdesignspacediffusionbased} for further details.

\subsection{Guidance}
\label{sec:background_cfg}
Conditional generation from $p(x|c)$ can be achieved by replacing $\nabla \log p_t(x)$ with the conditional score $\nabla \log p_t(x|c)$. However, in order to increase adherence to the condition $c$, strong guidance is often employed by introducing a guidance strength $w > 0$ and instead using the guided score  
\begin{equation}
\label{eq:cfg_score}
\nabla\log p_t(x|c) + w\nabla \log p_t(c|x).
\end{equation}
So, the backward dynamics is given by
\begin{multline}\label{eqn:cfg_full}
dY_t^w = \Big[Y^w_t + 2\grad \log p_{T-t}(Y_t^w|c) + 2w\nabla \log p_t(c|Y_t^w) \Big]dt + \sqrt{2}dB_t.
\end{multline} 
Guided scores can be obtained either by using learned classifiers $p_t(c|x)$ for noisy data, or by applying Bayes' rule $\nabla \log p_t(c|x) = \nabla \log p_t(x|c) - \nabla \log p_t(x)$, yielding  Classifier-Free Guidance (CFG) which directly uses a score network for both the unconditional $\nabla \log p_t(x)$ and the conditional $\nabla \log p_t(x|c)$ scores, thereby avoiding a separate classifier.

\subsection{Stochastic Optimal Control}
Stochastic optimal control deals with the problem of controlling a system whose state evolves over time according to a stochastic process, with the goal of optimizing a certain objective. Consider a stochastic process $Y_t^w$ whose dynamics can be influenced by a control variable $w_t(Y_t^w)$. Define a reward function $R(Y^w, w)$ that quantifies the desirability of the terminal state $Y_T^w$ and the entire trajectory $(Y_t^w)_{t\in [0,T]}$. We consider the following objective for $w$
\[
    R(w) := \mathbb{E} \left[R(Y^w, w)\right ]=\mathbb{E} \left[ \Phi(Y_T^w) + \int_0^T r(t, Y_t^w, w_t(Y_t^w)) dt  \right],
\]
where $\Phi$ is a terminal reward function and $r$ is an instantaneous reward function.
The goal is to find the control policy $w_t^*(Y_t^w)$ in some class $\mathbb{U}$ that maximizes this reward. The value function $V(t, x)$ is defined as the optimal expected reward obtained by starting from state $x$ at time $t$:
\begin{equation}
    V_t(x)\! = \!\sup_{w \in \mathbb{U}} \mathbb{E} \!\!\left[ \Phi(Y_T^w) \! + \!\!\int_t^T \!\!\!\!\!\!r(s, Y_s^w, w_s(Y_s^w)) ds \Big | Y_t^w = x \right]\!\!.
\end{equation}
Under suitable regularity conditions, the value function satisfies the Hamilton-Jacobi-Bellman (HJB) equation, a backward-in-time partial differential equation:
\begin{equation}
    -\frac{d}{d t} V_t(x) = \sup_{w\in \mathbb{U}} \left\{ \mathcal{A}^w V_t(x) + r(t, x, w) \right\},
\end{equation}
where $\mathcal{A}^w$ is the infinitesimal generator of the stochastic process $Y_t^w$, which depends on the control $w$. Solving the HJB equation yields the optimal value function $V^*_t(x)$ and the optimal control policy 
\begin{equation}
w_t^*(x) = \arg\max_{w\in \mathbb{U}} \{ \mathcal{A}^w V_t(x) + r(t, x, w) \}.
\end{equation}
 We refer to~\citep{fleming2006controlled} for further details.

\section{Theoretical Results for Guidance}
\label{sec:theory}

In this section, we present our two main theoretical results for guidance. The first result establishes a connection between the guidance strength and $\log p_t(c|Y^w_{T-t})$ -- a measure of the alignment of the guided samples with the classifier at time $t$. Our second result shows that the application of guidance preserves the support of the true conditional distribution, $\supp p(x|c)$. We defer proofs to Appendix~\ref{app:proofs}.

In the remainder of this paper, we consider a generalization of the guidance introduced in \cref{sec:background_cfg}, by allowing $w$ to depend on time, position, and the conditioning class. To simplify notation, we suppress these dependencies and write $w := w_t(Y_t^w, c)$.

\subsection{Analysis of Classifier Confidence Along Guided Trajectories}
\label{sec:theory_1}
Let us introduce the backward running "guiding field" $G_t$:
\begin{align}
     \label{eqn:G_t}
    G_t(x) := \log p_{T-t}(c|x) - \log p(c) =  \log p_{T-t}(x|c) - \log{p_{T-t}(x)}, \nonumber
\end{align}
where we also suppress the dependence on $c$ in the notation for $G_t$. So, \eqref{eqn:cfg_full} reads as
\£
\label{eqn:cfg_full}
dY_t^w = \Big[Y^w_t + 2\grad \log p_{T-t}(Y_t^w|c) + 2w\grad G_t(Y_t^w) \Big]dt + \sqrt{2}dB_t.
\£

Combining Ito's lemma and the Fokker-Planck equation, we obtain our main technical lemma. We defer the proof to the Appendix.
\begin{restatable}{lemma}{ItoForGt}
\label{lemma:ito}
    \begin{equation} \label{eqn:ito}
        d G_t(Y^w_t) = (1+2w)\norm{\grad G_t(Y_t^w)}^2dt + \sqrt{2}\grad G_t(Y_t^w) \cdot dB_t.
    \end{equation}
\end{restatable}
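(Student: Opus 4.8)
The plan is to apply Itô's formula to the time-dependent scalar field $G_t(x)$ along the controlled diffusion $Y^w_t$ and then use the Fokker--Planck equation to collapse the resulting drift to the stated form. Since $G_t$ depends on $t$ explicitly (through $p_{T-t}$) while $Y^w_t$ has constant diffusion coefficient $\sqrt{2}\,\Id_D$, Itô's formula gives
\begin{equation*}
dG_t(Y^w_t) = \Bigl(\partial_t G_t + \nabla G_t\cdot b_t + \Delta G_t\Bigr)(Y^w_t)\,dt + \sqrt{2}\,\nabla G_t(Y^w_t)\cdot dB_t,
\end{equation*}
where $b_t(x) = x + 2\nabla\log p_{T-t}(x|c) + 2w\,\nabla G_t(x)$ is the drift of $Y^w_t$ and we used $\tfrac12\Tr(2\Id_D\,\nabla^2 G_t) = \Delta G_t$. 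The martingale term already matches the claim, so it remains to show that the bracketed drift equals $(1+2w)\|\nabla G_t\|^2$. Splitting off the contribution $2w\,\nabla G_t\cdot\nabla G_t = 2w\|\nabla G_t\|^2$ coming from the guidance part of $b_t$, this reduces to the purely analytic identity
\begin{equation*}
\partial_t G_t + x\cdot\nabla G_t + 2\,\nabla G_t\cdot\nabla\log p_{T-t}(x|c) + \Delta G_t = \|\nabla G_t\|^2 .
\end{equation*}

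The key input for this identity is that $p_t$ and $p_t(\cdot|c)$ evolve under the \emph{same} Ornstein--Uhlenbeck Fokker--Planck equation $\partial_t\rho = \nabla\cdot(x\rho) + \Delta\rho$ --- conditioning on $c$ only changes the initial law, not the forward dynamics. Substituting $\rho = e^{u}$ and dividing by $\rho$ turns this into the viscous Hamilton--Jacobi equation for the log-density,
\begin{equation*}
\partial_t u = D + x\cdot\nabla u + \Delta u + \|\nabla u\|^2 ,
\end{equation*}
which holds for both $u = \log p_t$ and $u = \log p_t(\cdot|c)$. Subtracting the two instances, the dimension constants $D$ cancel; writing $s = T-t$ so that $\partial_t = -\partial_s$ and $G_t = \log p_s(\cdot|c) - \log p_s$, this yields
\begin{equation*}
-\partial_t G_t = x\cdot\nabla G_t + \Delta G_t + \bigl(\|\nabla\log p_s(\cdot|c)\|^2 - \|\nabla\log p_s\|^2\bigr).
\end{equation*}
Finally, factor the last difference of squares as $\nabla G_t\cdot\bigl(\nabla\log p_s(\cdot|c) + \nabla\log p_s\bigr)$ and rewrite $\nabla\log p_s(\cdot|c) + \nabla\log p_s = 2\nabla\log p_s(\cdot|c) - \nabla G_t$; this reproduces, with the sign opposite to the Itô drift, the cross term $2\nabla G_t\cdot\nabla\log p_{T-t}(x|c)$ together with a leftover $-\|\nabla G_t\|^2$. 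Plugging back, the $x\cdot\nabla G_t$, $\Delta G_t$ and $2\nabla G_t\cdot\nabla\log p_{T-t}(x|c)$ terms cancel pairwise and only $\|\nabla G_t\|^2$ remains, which is exactly the displayed identity; adding the $2w\|\nabla G_t\|^2$ back gives \eqref{eqn:ito}.

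Beyond the routine bookkeeping, the main obstacle is to make these formal manipulations rigorous. One needs sufficient smoothness, strict positivity, and integrability of $p_t$, $p_t(\cdot|c)$ and their spatial log-derivatives to legitimately (i) invoke Itô's formula and differentiate $\log p_{T-t}$ in space and time, and (ii) treat $\int \sqrt{2}\,\nabla G_t(Y^w_t)\cdot dB_t$ as a genuine (local) martingale; the instant smoothing of the OU semigroup handles the former for $t$ bounded away from the data endpoint, while the latter requires growth control on $\nabla G_t$ along trajectories. One must also be scrupulous about the time-reversal sign, since flipping it is the one error that would change the coefficient $(1+2w)$. With those points addressed, the proof is the short computation sketched above.
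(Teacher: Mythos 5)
Your proposal is correct and follows essentially the same route as the paper: apply Itô's formula to $G_t(Y^w_t)$, then use the Fokker--Planck equation for the forward OU process (shared by $p_t$ and $p_t(\cdot|c)$) to pass to the log-density PDE, subtract the two instances, and verify that the drift collapses to $(1+2w)\|\nabla G_t\|^2$. The only difference is presentational — you split off the $2w\|\nabla G_t\|^2$ term and phrase the remainder as a standalone analytic identity, whereas the paper first derives $\partial_t G_t$ in full and then substitutes into Itô — but the algebra and the key ingredients are identical.
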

Taking expectations in (\ref{eqn:ito}) and using (\ref{eqn:G_t}), we can derive an expression for the mean likelihood of target class $c$ at time $t$, $\E \log p_{T-t}(c|Y_t^w)$. It is given by an integral of the \textit{known} term $\grad G_t$, along the paths of the guided process $Y^w$:
\begin{align} \label{eqn:decomp}
    \E \log p_{T-t}(c|Y_t^w) =C + \E\int_0^{t} (1+2w)\norm{\grad G_s(Y_s^w)}^2ds,
\end{align}
where the constant $C=\log p(c) + \E \log \frac{p_T(Y_0^w|c)}{p_T(Y^w_0)}$ does not depend on the choice of the guidance function $w$. Moreover, we note that the quadratic variation of $G_t$ satisfies $d\langle G \rangle_t = 2\norm{\grad G_t}^2dt$. Therefore, the Dol\'eans-Dade stochastic exponential 
of the martingale $-\sqrt{2}\nabla G_s(Y_s^w) \cdot dB_s$ is equal to
\begin{align}    
\label{eq:S_t}
    S_t := \exp\del{-G_t + \int_0^t 2w_s\norm{\grad G_s}^2ds} =\frac{p(c)}{p(c|Y^w_t)}\exp\del{\int_0^t 2w_s\norm{\grad G_s}^2ds}. 
\end{align}

$S_t$ is therefore a non-negative martingale on $[0,T)$ and a super-martingale on $[0,T]$. We refer the reader to~\cite{karatzas1998brownian} for more details on the required martingale theory. Leveraging the super-martingale property, we obtain the following corollary.

\begin{restatable}{corollary}{CorollaryMartingale} By Doob's inequality for any $\delta > 0$ with probability at least $1-\delta$ sampled guided trajectory $Y^w$ satisfies
\label{cor:guarantees}
    \£
    \label{eq:doob}
    \frac{p_{T-t}(Y^w_t|c)}{p_{T-t}(Y^w_t)} \ge 
    \delta\exp\square{\int_0^t 2w_s\norm{\grad G_s(Y^w_s)}^2ds}\Big)
    \£
    for all $t\in [0,T]$.
    Moreover, if we condition on the event $Y^w_t=x$, then for any $\gamma \in [0, T-t)$
    \£
    \label{eq:bar_S_t_martingale}
       p^{-1}_{T-t}(c|x) =
       \E\square{p^{-1}_{T-t-\gamma}(c|Y^w_{t+\gamma})\exp\del{\int_t^{t+\gamma} \!\!\!\!\!\!\!2w\norm{\grad G_s}^2ds}\Big| Y^w_t = x}.
    \£
\end{restatable}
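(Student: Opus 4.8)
The plan is to read off both displays from the (super)martingale properties of the process $S_t$ introduced in \eqref{eq:S_t}, which the discussion preceding the statement already establishes is strictly positive, a true martingale on $[0,T)$, and a supermartingale on the closed interval $[0,T]$. The only additional fact needed for the first display is $\mathbb{E}[S_0]=1$. This holds for the faithful initialization $Y^w_0\sim p_T(\cdot\mid c)$ — the law under which the $w=0$ case of \eqref{eqn:cfg_full} exactly reverses the forward OU process started from $p(\cdot\mid c)$, and which does not depend on $w$: using $G_0(x)=\log p_T(x\mid c)-\log p_T(x)$ from \eqref{eqn:G_t} we get $S_0=e^{-G_0(Y^w_0)}=p_T(Y^w_0)/p_T(Y^w_0\mid c)$, hence $\mathbb{E}[S_0]=\int p_T(x\mid c)\,p_T(x)/p_T(x\mid c)\,dx=1$. (More generally the first display holds with $\delta$ replaced by $\delta\,\mathbb{E}[S_0]$, hence in particular whenever $\mathbb{E}[S_0]\le 1$.)

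For the first display I would first rewrite the event. Bayes' rule gives $p_{T-t}(Y^w_t\mid c)/p_{T-t}(Y^w_t)=p_{T-t}(c\mid Y^w_t)/p(c)$, so the inequality inside the probability in \eqref{eq:doob} is, for each $t$, equivalent to $S_t\le 1/\delta$. Since $t\mapsto G_t(Y^w_t)$, and hence $t\mapsto S_t$, has continuous paths under the standing regularity assumptions on $p$, the event in \eqref{eq:doob} coincides exactly with $\{\sup_{t\in[0,T]}S_t\le 1/\delta\}$. Doob's maximal inequality for non-negative supermartingales on $[0,T]$ then gives $\mathbb{P}(\sup_{t\in[0,T]}S_t\ge 1/\delta)\le\delta\,\mathbb{E}[S_0]=\delta$, and passing to the complement yields \eqref{eq:doob}.

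For the second display I would combine the martingale property of $S$ on $[0,T)$ with the Markov property of $Y^w$. Since $t+\gamma<T$, the martingale property gives $\mathbb{E}[S_{t+\gamma}\mid\mathcal{F}_t]=S_t$; dividing through by the positive, $\mathcal{F}_t$-measurable variable $S_t$ — so that the factor $p(c)$ and the exponential $\exp(\int_0^t 2w_s\|\nabla G_s\|^2\,ds)$ cancel in the ratio $S_{t+\gamma}/S_t$ — leaves
\[
\mathbb{E}\!\left[\, p_{T-t}(c\mid Y^w_t)\, p^{-1}_{T-t-\gamma}(c\mid Y^w_{t+\gamma})\, \exp\!\Big(\int_t^{t+\gamma} 2w_s\|\nabla G_s\|^2\,ds\Big) \,\Big|\, \mathcal{F}_t \right] = 1 .
\]
By the Markov property this conditional expectation depends on $\mathcal{F}_t$ only through $Y^w_t$, so on the event $\{Y^w_t=x\}$ the deterministic factor $p_{T-t}(c\mid x)$ comes out of the expectation, and rearranging gives precisely \eqref{eq:bar_S_t_martingale}. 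The only genuinely delicate step — which I would relegate to the Appendix — is verifying that $S$ is a true martingale on $[0,T)$ and a supermartingale up to and including $t=T$, rather than merely a local martingale, since this is exactly what licenses Doob's inequality and the use of the martingale property above; it follows from $S$ being a non-negative local martingale (hence automatically a supermartingale) together with an integrability/Novikov-type control of the Doléans–Dade exponential in \eqref{eq:S_t} and control of the scores near the endpoint $t=T$.
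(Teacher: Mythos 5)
Your proposal is correct and follows essentially the same route as the paper: Doob's maximal inequality for the non-negative supermartingale $S_t$ with $\mathbb{E}[S_0]=1$ for the first display, and the true-martingale property of $S$ on $[0,T)$ (justified via Novikov) for the second. The only cosmetic difference is in the second display, where you obtain it by conditioning $\mathbb{E}[S_{t+\gamma}\mid\mathcal{F}_t]=S_t$ and dividing by $S_t$, whereas the paper re-runs the Doléans--Dade/Novikov argument for the process restarted at $Y^w_{t_0}=x$; both are valid and equivalent.
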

Since the term in the integrand is non-negative, and equal to zero if $w\equiv 0$ we get that for any $x$ s.t. $\grad G_t(x) \neq 0$
\[
\E\square{p^{-1}_{T-t-\gamma}(c|Y^w_{t+\gamma})\Big| Y^w_t = x} 
<  p^{-1}_{T-t}(c|x) = \E\square{p^{-1}_{T-t-\gamma}(c|Y^0_{t+\gamma})\Big| Y^0_t = x}.
\]
This implies that the addition of non-negative guidance guarantees increased alignment with the class label, compared to simulation without guidance. Moreover, by~\eqref{eq:doob} the term
$
\int_0^t 2w_s\norm{\grad G_s(Y^w_s)}^2ds
$
plays the role of the high probablity lower bound for $\log p_{T-t}(c|Y^w_t) - \log p(c)$.

\subsection{Support Recovery}
\label{sec:support_recovery}

In this section, we prove that under mild assumptions the terminal state of the guided process, $Y^w_T$, lies within $\supp p_0(\cdot|c)$. Specifically, we only assume: (i) the support of the unconditional distribution is compact, i.e.\ $\supp p_0 \subseteq B(0,R)$ for some $R > 0$; and (ii) the guidance is uniformly bounded, i.e.\ $0 < w \le C_{\max}$. Notably, we allow $p_0(\cdot|c)$ to be singular, and our result holds even when the guidance is not constant; (iii) The conditional class has positive weight $p(c) > 0$.

Below we provide a sketch of the proof, while the full proof can be found in the Appendix~\ref{app:proofs}.
First, we note that under these conditions, for all $t\in[0,T)$, by Tweedie's formula \citep{robbins1956} 
\£
\label{eq:naive_bound_G_t}
\norm{\grad G_t(x)}^2 = 
\frac{e^{2(t-T)} \norm{\E[X_0|X_{T-t}=x, c]-\E[X_0|X_{T-t}=x]}}{(1-e^{2(t-T)})^2} 
\le \frac{4e^{2(t-T)}}{(1-e^{2(t-T)})^2}R^2.
\£

Since the true conditional backward process $Y^0_t$ differs from guided process $Y^w_t$ by the drift term $w_t \grad G_t$. Girsanov's theorem combined with the data-processing inequality bounds the $\KL$ divergence
\[
\KL(Y^w_{T-\varepsilon}\| Y^0_{T-\varepsilon}) \le \KL(Y^w_{[0,T-\varepsilon]}\| Y^0_{[0,T-\varepsilon]}) 
= \expectation \int_0^{T-\varepsilon} w^2_t\norm{\grad G_t(Y_t^w)}^2dt,
\]
for all $\varepsilon > 0$, since Novikov's condition holds due to~\eqref{eq:naive_bound_G_t}. 
The integrand coincides with \eqref{eqn:decomp} up to a scalar $0 < \frac{w^2_t}{1+2w_t} < C_{\max}$, so 
\[
\KL(Y^w_{T-\varepsilon}\| Y^0_{T-\varepsilon}) \le
 \expectation \int_0^{T-\varepsilon}\!\!\!\!\!\!\! \frac{w^2_t}{1+2w_t} (1+2w_t)\!\norm{\grad G_t(Y_t^w)}^2\!\!dt\!\le\!-C_{\max} \log p(c).
\]
Therefore, for any $\varepsilon > 0$,  the $\KL$ divergence between $Y^w_{T-\varepsilon}$ and $Y^0_{T-\varepsilon}$ is uniformly bounded. Then support recovery follows from Donsker and Varadhan's variational formula.   
\begin{restatable}{theorem}{SupportRecovery}
    \label{thm:DDPM_support_recovery}
    Assume that there are $R, C_{\max} < \infty$ such that almost surely $\supp p_0 \subseteq B(0,R)$ and $w <C_{\max}$.
    Then almost surely $Y^w_T \in \supp p(\cdot |c)$. 
\end{restatable}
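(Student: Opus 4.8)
The plan is to turn the two facts highlighted in the proof sketch—the uniform $\KL$ bound and the variational (Donsker–Varadhan) characterization of $\KL$—into an almost-sure support statement for $Y^w_T$ via a limiting argument as $\varepsilon \to 0$. First I would record the bound $\KL(Y^w_{T-\varepsilon}\|Y^0_{T-\varepsilon}) \le -C_{\max}\log p(c) =: K < \infty$, which holds uniformly in $\varepsilon > 0$ thanks to \eqref{eq:naive_bound_G_t} (giving Novikov's condition and hence validity of Girsanov) and the identification of the integrand with \eqref{eqn:decomp} up to the factor $w_t^2/(1+2w_t) \in (0, C_{\max})$. The law of $Y^0_{T-\varepsilon}$ is, by definition, the law of $X_\varepsilon$ conditioned on $c$, i.e.\ the data distribution $p_0(\cdot\mid c)$ convolved with $\mathcal N(0,(1-e^{-2\varepsilon})\Id_D)$ after the $e^{-\varepsilon}$ contraction; in particular $\supp(\text{law of }Y^0_{T-\varepsilon})$ is contained in (a slight fattening of) $\supp p_0(\cdot\mid c)$ and converges to it as $\varepsilon\to 0$.

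The key step is: \emph{a law with finite $\KL$ relative to a reference law is absolutely continuous with respect to it, hence supported inside the support of the reference law.} Concretely, for any Borel set $A$ with $\rP(Y^0_{T-\varepsilon}\in A) = 0$, the Donsker–Varadhan formula (applied to $f = \lambda\,\ind_A$ and letting $\lambda \to \infty$) forces $\rP(Y^w_{T-\varepsilon}\in A) = 0$; taking $A$ to be the complement of the closed support of $Y^0_{T-\varepsilon}$ shows $Y^w_{T-\varepsilon}\in\supp(\text{law of }Y^0_{T-\varepsilon})$ almost surely. I would then let $\varepsilon\to 0$: $Y^w_{T-\varepsilon}\to Y^w_T$ almost surely by continuity of the (SDE) paths of $Y^w$ up to time $T$—here one uses that the guided drift $w_t\grad G_t$ is integrable near $t = T$, which again follows from \eqref{eq:naive_bound_G_t} after multiplying by the bounded factor $w_t$ and noting $\int_0^{T}\! \big(e^{2(t-T)}/(1-e^{2(t-T)})^2\big)^{1/2}dt$ is finite (the singularity at $t=T$ is only like $(T-t)^{-1}$ inside a square root, i.e.\ $(T-t)^{-1/2}$, which is integrable)—together with the fact that $\supp p_0(\cdot\mid c)$ is closed and the supports of the laws of $Y^0_{T-\varepsilon}$ shrink down to it.

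The main obstacle is the interchange of the "almost surely inside the support" conclusion with the limit $\varepsilon\to 0$: each $\varepsilon$ gives an a.s.\ statement on a $\varepsilon$-dependent null set, and one must assemble a single null set. The clean way is to take a countable sequence $\varepsilon_n\downarrow 0$, intersect the countably many null sets, and use that $\bigcap_n \overline{\supp(\text{law of }Y^0_{T-\varepsilon_n})} = \supp p_0(\cdot\mid c)$ (or is contained in an arbitrarily small neighborhood of it), so that on the common full-measure event $Y^w_{T-\varepsilon_n}$ lies in shrinking neighborhoods of $\supp p_0(\cdot\mid c)$ and its limit $Y^w_T$ lies in the closed set $\supp p_0(\cdot\mid c)$ itself. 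A secondary technical point worth spelling out is that $Y^0$ is genuinely well-defined on $[0,T]$ with terminal law $p_0(\cdot\mid c)$ even when $p_0(\cdot\mid c)$ is singular—this is standard for the OU reverse SDE and is where assumption (iii), $p(c)>0$, is used to make conditioning on $c$ legitimate and to keep $K$ finite.
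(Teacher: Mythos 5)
Your high-level plan — use the uniform $\KL$ bound, invoke Donsker–Varadhan, and pass to the limit $\varepsilon\to 0$ — is in the right spirit, but two of the concrete steps you propose are actually false, and both failures are precisely what makes the theorem nontrivial and force the more delicate quantitative argument in the paper.

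\textbf{The support of the reference law is not small.} You claim that $\supp(\text{law of }Y^0_{T-\varepsilon})$ is contained in a slight fattening of $\supp p_0(\cdot\mid c)$. This is not true: the law of $Y^0_{T-\varepsilon}$ is (after the deterministic $e^{-\varepsilon}$ contraction) $p_0(\cdot\mid c)$ convolved with a nondegenerate Gaussian, and any such convolution has a strictly positive density on all of $\R^d$. Hence $\supp(\text{law of }Y^0_{T-\varepsilon}) = \R^d$ for every $\varepsilon>0$, and the absolute-continuity argument you propose (finite $\KL$ $\Rightarrow$ supported inside the support of the reference) gives no information whatsoever — its conclusion is the triviality $Y^w_{T-\varepsilon}\in\R^d$. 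The paper avoids this trap by using Donsker–Varadhan quantitatively: substituting $Z=\alpha\ind_{e^{t-T}\Omega^c_\varepsilon}$ gives, for each finite $\alpha$, a bound on $\rP(Y^w_t\notin e^{t-T}\Omega_\varepsilon)$ in terms of the small-but-nonzero probability $\rP(Y^0_t\notin e^{t-T}\Omega_\varepsilon)$; one then sends $t\to T$ first (so this tail probability vanishes because $Y^0_t$ converges to $\supp p_0(\cdot\mid c)$) and only afterwards $\alpha\to\infty$. The order of limits is essential and cannot be collapsed into a soft absolute-continuity statement.

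\textbf{The drift is not integrable near $T$.} You assert that $w_t\grad G_t$ is integrable on $[0,T]$ because $\big(e^{2(t-T)}/(1-e^{2(t-T)})^2\big)^{1/2}$ behaves like $(T-t)^{-1/2}$. This is an arithmetic error: near $t=T$ one has $1-e^{2(t-T)}\approx 2(T-t)$, so $e^{2(t-T)}/(1-e^{2(t-T)})^2\approx (2(T-t))^{-2}$ and its square root is $\approx (2(T-t))^{-1}$, which is \emph{not} integrable. Consequently, pathwise continuity of $Y^w$ at $t=T$ does not follow from a crude drift-integrability bound; it is a genuine technical obstacle. The paper handles it with a separate lemma using a dyadic decomposition of $[0,T)$, the reflection principle to control $\sup$-increments of the reference forward process, and the Donsker–Varadhan bound again to transfer this control to $Y^w$, summing the resulting tail bounds to obtain a.s. Cauchy behavior. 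Without this (or an equivalent argument), your limit $Y^w_{T-\varepsilon}\to Y^w_T$ is unjustified, and the interchange-of-limits scheme you describe has nothing to run on.

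In short, the two ingredients you treat as routine — that $Y^0_{T-\varepsilon}$ nearly has support $\Omega$, and that $Y^w$ is pathwise continuous up to $t=T$ — are exactly the two claims that need (and in the paper receive) genuine proof, and the reasons you give for them are incorrect.
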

\begin{remark}
\label{rmk:full_supp_recovery}
    \cref{thm:DDPM_support_recovery} still holds for negative guidance as long as $w > -\frac{1}{2} + \varepsilon$ uniformly in $w$ for some $\varepsilon > 0$.
\end{remark}

\section{Adaptive Guidance Learning}
\label{sec:hjb}

As shown in the previous section, applying guidance encourages the growth of $ \E\log p(c|Y_T^w)$. However, by itself, the maximization of $ \E\log p(c|Y_T^w)$ might be an ill-defined problem. For example, in a simple Gaussian mixture case, this objective is maximized at infinity. Moreover, too much guidance can push trajectories to low-probability regions of $p_t(\cdot|c)$, where the score function is poorly learned. 

Both of these problems are addressed if we additionally assume that the guided trajectory distribution $Y^w$ and the unguided one $Y$ are close in $\KL$ divergence. Therefore, we introduce our SOC objective that balances these goals
\begin{equation} \label{eqn:objective}
    R(w) := \E \log p(c|Y_T^w) - \alpha\KL\infdiv{Y^w_{[0,T]}}{Y_{[0,T]}},
\end{equation}
where $\alpha > 0$ governs the tradeoff.
This leads to the following SOC problem: 
\begin{equation} \label{eqn:soc}
    w^* = \argmax_{w\in \mathbb{U}} \E \log p(c|Y_T^w) - \alpha\KL\infdiv{Y^w_{[0,T]}}{Y_{[0,T]}},
\end{equation}
where $\mathbb{U}$ is some class of guidance strength functions. In the most general case, which is the focus of this section, $\mathbb{U}$ is the space of adapted scalar-valued processes. By (\ref{eqn:decomp}) and Girsanov's theorem,
\begin{equation}\label{eqn:objective}
    R(w) = \E\int_0^T (1+2w_t-\alpha w_t^2)\norm{\grad G_t(Y_t^w)}^2dt + C,
\end{equation}
where $C$ is a constant that does not depend on the choice of guidance strength $w$. We introduce the value function 
\[
    V_t(x) =  
    \sup_w \E\square{\int_t^T (1+2w_t-\alpha w_t^2)\norm{\grad G_s(Y^w_s)}^2ds|Y_t^w=x}
\]
The associated HJB equation is given by
\begin{multline*}
    -\frac{d}{dt}V_t(x) =
\sup_{w \in \R}\Big[ \innerproduct{x+2\grad\log p_{T-t}(x|c)+2w\grad G_t(x)}{\grad V(t,x)} 
    + \Delta V(t,x)+(1+2w_t-\alpha w_t^2)\norm{\grad G_t}^2\Big],
\end{multline*}

with terminal condition $V_T\equiv 0$.

For fixed $t,x$, the objective in the supremum is a simple quadratic in $w_t(x)$. Thus 
\begin{equation}
    w^*_t(x) = \frac{\grad G_t\cdot \grad V_t + \norm{\grad G_t}^2}{\alpha \norm{\grad G_t}^2}
\end{equation}
and
\£
\label{eq:final_HJB}
 -\frac{d}{dt}V_t(x) =\frac{1}{\alpha}\curly{\frac{\grad G_t}{\norm{\grad G_t}}\cdot \grad V_t + \norm{\grad G_t}}^2 +\norm{\grad G_t}^2
 + \Delta V(t,x) + \innerproduct{x+2\grad\log p_{T-t}(x|c)}{\grad V(t,x)}.
\£

The resulting HJB equation can be estimated using finite difference methods or Physics-Informed Neural Networks (PINN) based objectives. Where the latter involves parameterization of the value function $V_t(x)$ as a neural network and its optimization by minimization of the squared difference between LHS and RHS of~\eqref{eq:final_HJB}.

However, both methods work only in low-dimensional cases, and do not allow for solutions that only depend on time $t$ or class $c$. In the next section, we discuss an alternative, scalable approach based on the adjoint state method. 

We emphasize that, unlike methods that learn a new guidance \textit{direction}, our method focuses on tuning only the \textit{scalar} guidance strength $w$, which allows for simpler network architectures and potentially less training data. 




\begin{figure*}[htbp]
    \vspace{-1.4cm}
    \centering
    \begin{minipage}[t]{0.7\textwidth}
        \centering

        \begin{subfigure}[t]{0.31\textwidth}
            \centering
            \includegraphics[width=\linewidth]{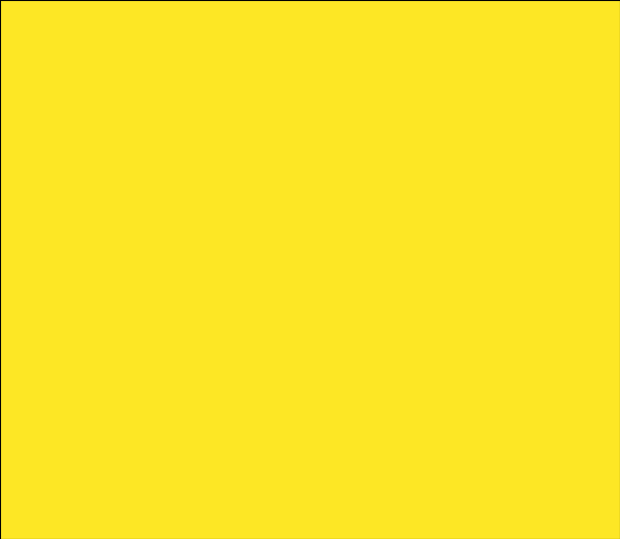}
            \caption{$T=0.0$}
            \label{fig:subfig1}
        \end{subfigure}
        \hfill
        \begin{subfigure}[t]{0.31\textwidth}
            \centering
            \includegraphics[width=\linewidth]{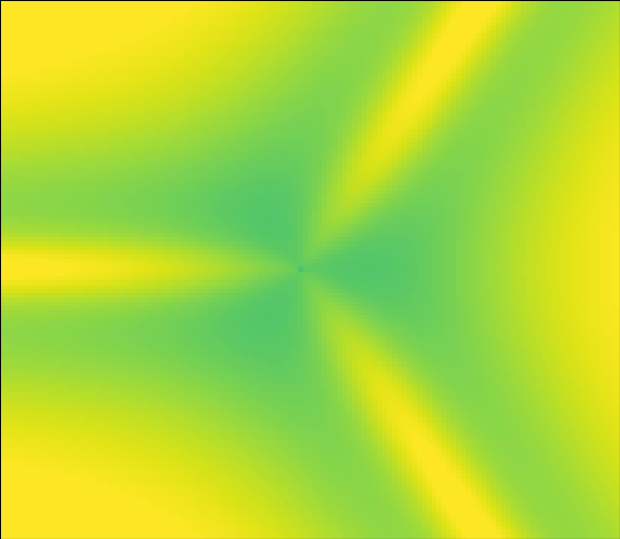}
            \caption{$T=0.1$}
            \label{fig:subfig2}
        \end{subfigure}
        \hfill
        \begin{subfigure}[t]{0.31\textwidth}
            \centering
            \includegraphics[width=\linewidth]{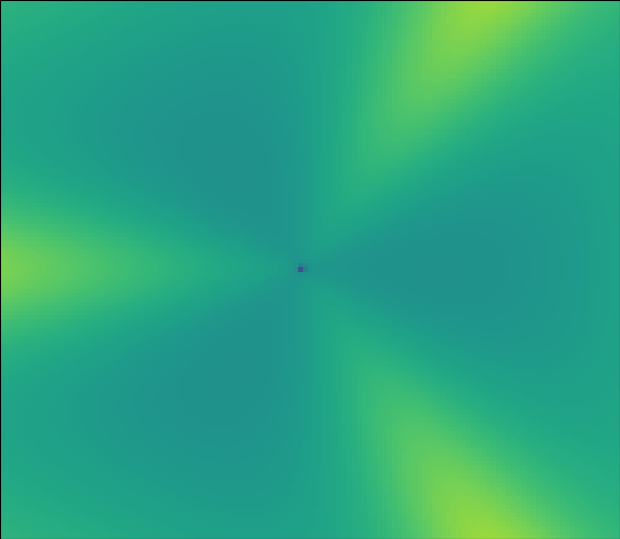}
            \caption{$T=0.5$}
            \label{fig:subfig3}
        \end{subfigure}

        \vspace{0.5cm}

        \begin{subfigure}[t]{0.31\textwidth}
            \centering
            \includegraphics[width=\linewidth]{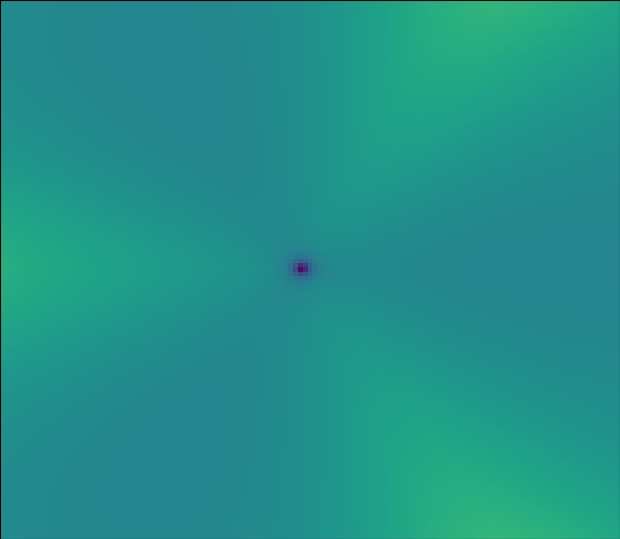}
            \caption{$T=1.0$}
            \label{fig:subfig4}
        \end{subfigure}
        \hfill
        \begin{subfigure}[t]{0.31\textwidth}
            \centering
            \includegraphics[width=\linewidth]{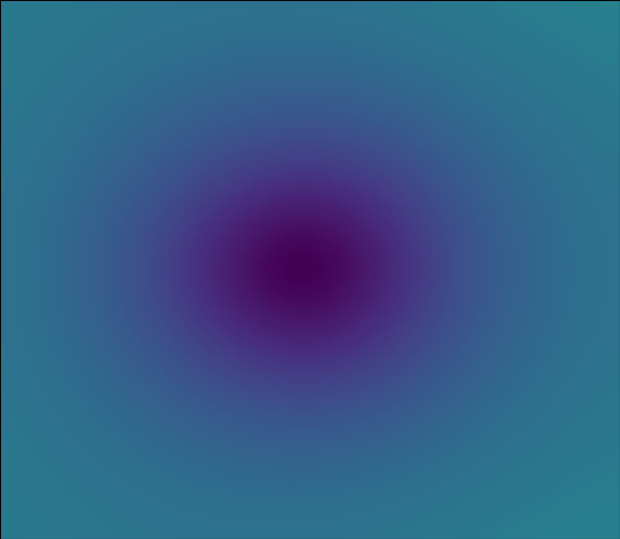}
            \caption{$T=2.5$}
            \label{fig:subfig5}
        \end{subfigure}
        \hfill
        \begin{subfigure}[t]{0.31\textwidth}
            \centering
            \includegraphics[width=\linewidth]{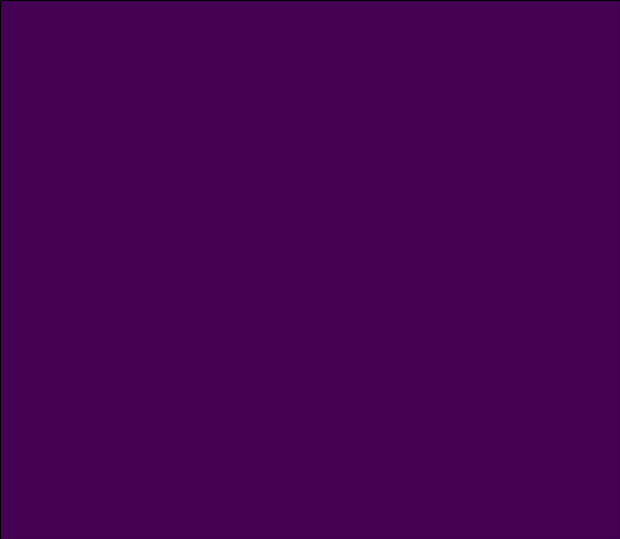}
            \caption{$T=5.0$}
            \label{fig:subfig6}
        \end{subfigure}
    \end{minipage}
    \hspace{0.5cm}
    \begin{minipage}{0.12\textwidth}
        \vspace*{1.4cm}
        \includegraphics[width=\textwidth, height=0.345\textheight]{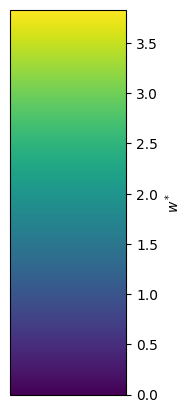}
    \end{minipage}

    \caption{Optimal guidance $w^*$ obtained by solving the HJB equation.}
    \label{fig:2x3_with_colorbar}
    \vspace{-0.5cm}
\end{figure*}

\section{Optimization of Stochastic Optimal Control Objective}
\label{sec:adjoint method}
The primary objective of this section is to solve the SOC problem defined in \eqref{eqn:soc} for general $\mathbb{U}$, i.e. find an optimal, adaptive guidance strength $w^*\in \mathbb{U}$ maximizing \eqref{eqn:objective}. We parameterize this guidance strength $w_{\theta}$ (potentially dependent on time, state, and conditioning prompt $c$) using a neural network whose parameters $\theta$ are optimized via stochastic gradient descent (SGD). 

To begin with we recall the exact formula~\eqref{eqn:objective} of our objective $R(w)$
\[
    R(w) = \E\int_0^T (1+2w_t-\alpha w_t^2)\norm{\grad G_t(Y_t^w)}^2dt + C,
\]
The main challenge is that the change of the guidance $w_{\theta}$ not only changes the instant reward $1+2w_t(\theta)-\alpha w^2_t(\theta))^2$ but also changes the distribution over trajectories $Y^w$. To capture this we introduce the \textit{sensitivity process}:
\[
Z_t(\tau) = \frac{\partial Y^w_t}{\partial w_\tau}
\]
capturing how the value of the process $Y^w_t$ at time $t > \tau$ changes w.r.t. to change of the process at initial time $\tau$. Applying the chain rule we obtain
\£
\label{eq: sensitivity of R}
    \frac{\partial R(w)}{\partial w_\tau}
    =2\E (1 - \alpha w_\tau)\norm{\grad G_t(Y_t^w)}^2 
    + \expectation\int_\tau^T{\color{blue}{2(1+2w_t-\alpha w_t^2)\grad^T G_t(Y_t^w)\grad^2 G_t(Y_t^w)}}Z_t(\tau) dt
\£

The general approach to computing the second integral is to introduce the adjoint process. First, using the chain rule, we write the SDE on the sensitivity process
\[
dZ_t(\tau)
=\Big[Z_t(\tau) + 2\grad^2 \log p_{T-t}(Y^w_t|c)Z_t(\tau) 
+ 2\delta(\tau-t)\grad G_\tau(Y_\tau^w)+ 2w_t\grad^2 G_t(Y_t^w)Z_t(\tau)\Big]dt
\]
implying that $Z_t(\tau)$ is driven by the ODE 
\[
\dot{Z}_t(\tau) =\Big[{\color{red}\Id + 2\grad^2 \log p_{T-t}(Y^w_t|c)}
{\color{red}+ 2w_t\grad^2 G_t(Y_t^w)}\Big]Z_t(\tau),
\]
with initial condition $Z_\tau = 2\grad G_\tau(Y_\tau^w).$
We consider an adjoint backward process $\lambda_t$ with initial condition $\lambda_T=0$ and satisfying the ODE 
\[    
\dot{\lambda}_t = -{\color{blue}\underbrace{2(1+2w_t-\alpha w_t^2)\grad^T G_t(Y_t^w)\grad^2 G_t(Y_t^w)}_{:=B_t}}
-\big[{\color{red} \underbrace{\Id + 2\grad^2 \log p_{T-t}(Y^w_t|c)+ 2w_t\grad^2 G_t(Y_t^w)}_{:=A_t}}\big]^T\lambda_t
\]
Then, integrating by parts, we see that 
\[
\langle \lambda_\tau, 2\grad G_\tau(Y_\tau^w) \rangle  =
\int_\tau^T {\color{blue}{2(1+2w_t-\alpha w_t^2)\grad^T G_t(Y_t^w)\grad^2 G_t(Y_t^w)}} Z_t dt
\]
that allows us to simplify \eqref{eq: sensitivity of R} and get
\[
\frac{\partial R(w)}{\partial w_\tau}=2\E (1 - \alpha w_\tau)\norm{\grad G_t(Y_t^w)}^2 + 2\langle \lambda_\tau, \grad G_\tau(Y_\tau^w) \rangle
\]

Finally, if the guidance $w = w_\theta$ is parameterized by a neural network with parameters $\theta$, the full gradient is obtained via the chain rule:
\[
\frac{dR}{d\theta} = \E \int_0^T \frac{\partial R(w)}{\partial w_\tau} \frac{\partial w_\tau}{\partial \theta} \, d\tau.
\]
In practice, we avoid the explicit and computationally expensive formation of $A_t, \grad^2 G_t \in \R^{D\times D}$ by leveraging Vector-Jacobian Products (VJPs). Pseudocode for this efficient implementation is provided in Appendix~\ref{sec:discrete-soc-pseudocode}.

\section{Experiments}

\begin{figure*}[htbp]
    \centering
    \includegraphics[width=0.41\textwidth,valign=t]{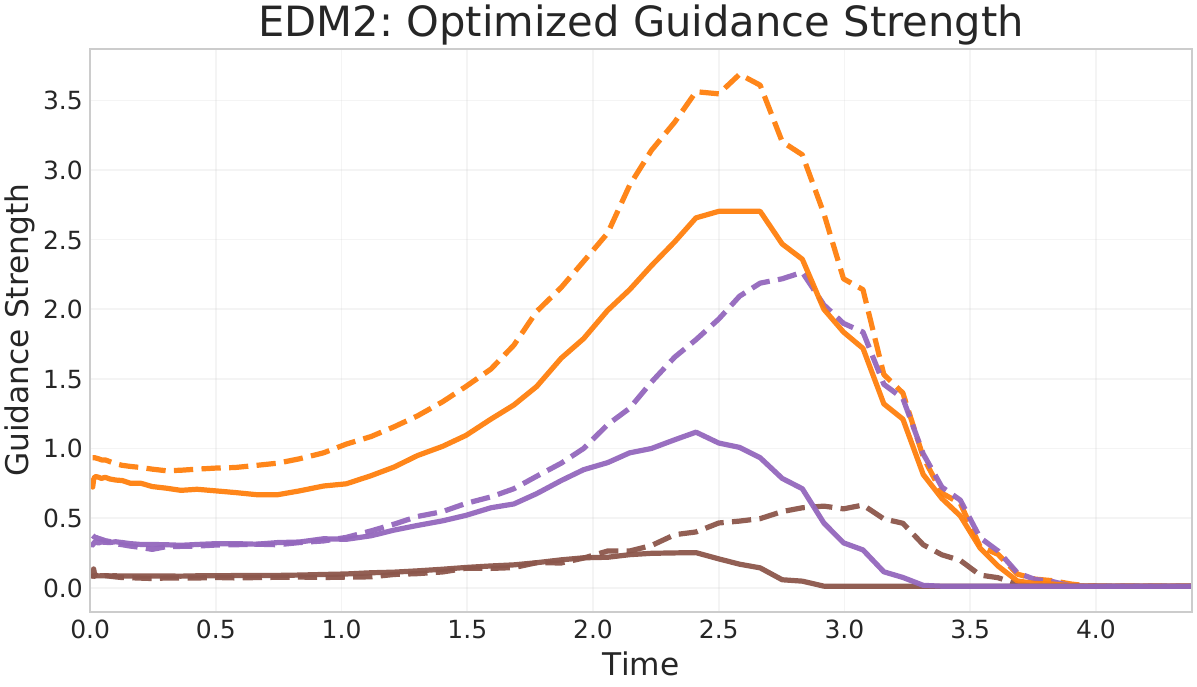}%
    \hspace{0.01\textwidth}%
    \includegraphics[width=0.41\textwidth,valign=t]{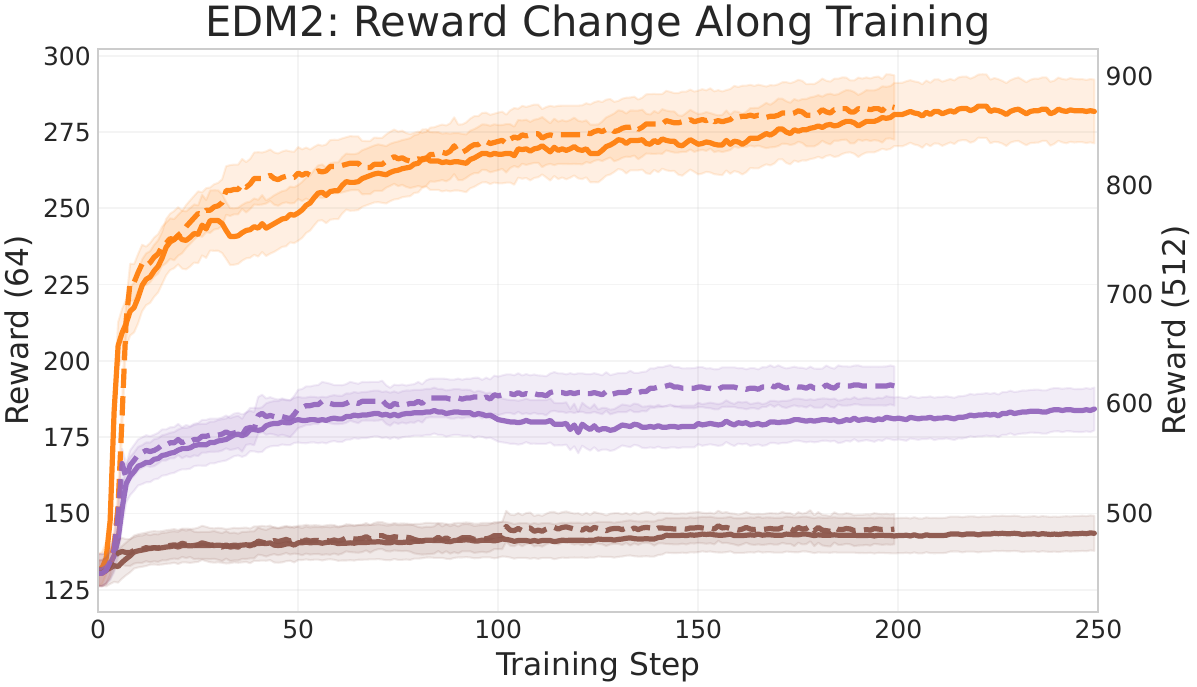}%
    \hspace{0.01\textwidth}%
    \raisebox{-0.3cm}{\includegraphics[width=0.15\textwidth,valign=t]{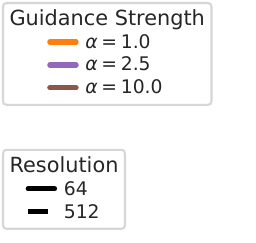}}%
    \caption{EDM2. Left: Guidance learned at the end of the training. Right: Change of the reward function $R(w_\theta)$ during the training with $\pm 1$ standard deviation.}
    \label{fig:edm2-plots}
\end{figure*}

\begin{figure*}[htbp]
    \centering
    \includegraphics[width=0.41\textwidth,valign=t]{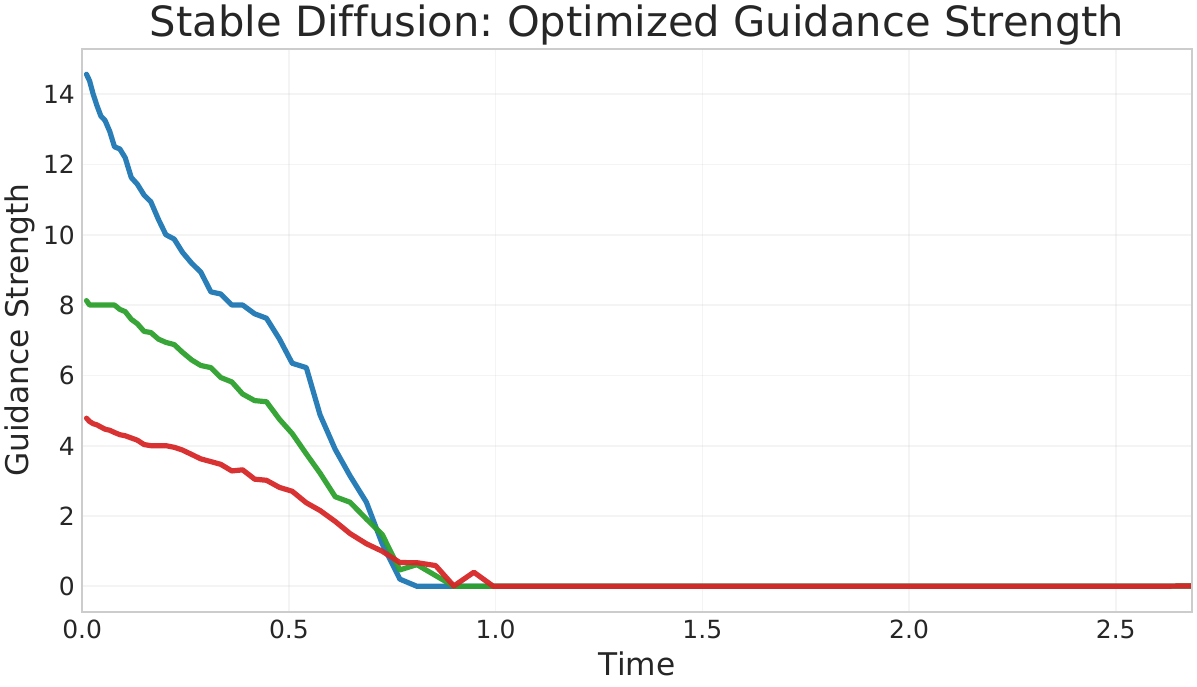}%
    \hspace{0.01\textwidth}%
    \includegraphics[width=0.41\textwidth,valign=t]{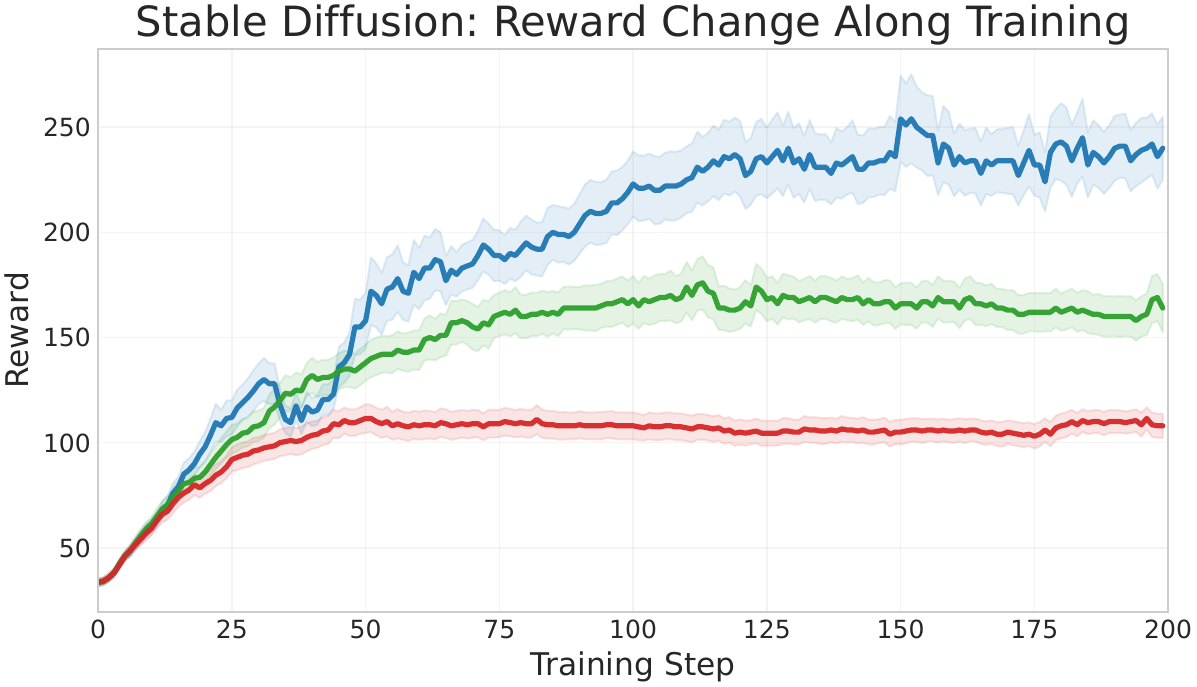}%
    \hspace{0.01\textwidth}%
    \raisebox{-0.3cm}{\includegraphics[width=0.15\textwidth,valign=t]{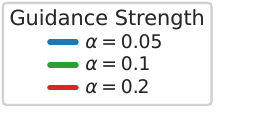}}%
    \caption{Stable Diffusion. Left: Guidance learned at the end of the training. Right: Change of the reward function $R(w_\theta)$ during the training with $\pm 1$ standard deviation.}
    \label{fig:sd-plots}
\end{figure*}

\paragraph{HJB simulation.}

As a first example, we consider a mixture $p(x)$ of four Isotropic Gaussians with variance $0.5$. One is centered at the origin, and the other three are centered on an equilateral triangle around the origin. The distribution $p(x|c)$ is the Gaussian at the origin. To obtain the optimal guidance strength in this case, we solve the Hamilton-Jacobi-Bellman PDE directly. We visualize the optimal guidance weights for different times in Figure \ref{fig:2x3_with_colorbar}. We would like to highlight that the optimal schedule depends on both time $t$ and space $x$, and is not a constant.

\paragraph{EDM2}

We evaluated our guidance optimization framework on the EDM2 model~\citep{Karras2024edm2} (CC BY-NC-SA 4.0), which was pre-trained on the ImageNet dataset~\citep{imagenet_cvpr09}. Specifically, we conducted experiments using a size-M model at a $512 \times 512$ resolution and a size-S model at $64 \times 64$. We fine-tuned the time-dependent guidance $w_\theta(t)$ by maximizing the reward~\eqref{eqn:objective} via the Adjoint Method (detailed in Section~\ref{sec:adjoint method}) with regularization parameters $\alpha \in \{2.5, 5.0, 10.0\}$. The guidance scale was optimized for 25 epochs (20 epochs for the $512 \times 512$ model) using 10 gradient steps per epoch, processing all 1,000 labels once per epoch. Additional implementation details are provided in Appendix~\ref{app:add-details}.

Table~\ref{edm2-table} compares the Fréchet Inception Distance (FID) and external classifier, InceptionV3 \citep{szegedy2016rethinking}, confidence between our trained guidance schedule and a constant $\alpha^{-1}$ baseline. This baseline represents the naive optimal strategy $w = \arg\max ( 1+2w-\alpha w^2)$ derived from Equation~\eqref{eqn:objective}. As shown, the trained schedule yields slightly higher classifier confidence at the cost of a marginally higher FID. Figure~\ref{fig:edm2-plots} illustrates the reward curve alongside the learned time-dependent guidance. The optimized guidance follows a bell-shaped trajectory, starting near $\alpha^{-1}$ at $t=0$ and decaying close to zero for $t>4$.
\paragraph{Stable Diffusion}
We extended our evaluation by training time-dependent guidance for Stable Diffusion 1.5~\citep{rombach2022high} (CreativeML-OpenRail-M license) on both the ImageNet and COCO-captions~\citep{chen2015microsoft} (CC BY 4.0) datasets. The training procedure mirrored the EDM2 setup, with full details available in Appendix~\ref{app:add-details}.

For both datasets, the optimized guidance profiles differ considerably, as depicted in Figures~\ref{fig:sd-plots} and~\ref{fig: training}. In all configurations, the guidance strength begins near $\alpha^{-1}$ at $t=0$, monotonically decreases to $0$ over the interval $[0,1]$, and remains at $0$ for $t>1$. A plausible explanation for this behavior is the difference in conditioning mechanisms: while EDM2 relies on discrete, well-separated class labels, Stable Diffusion utilizes prompts represented as continuous embeddings in a high-dimensional space. In the first case, applying guidance may cease to provide further benefits once the conditional class is established earlier in the process.
\section{Limitations and Future Work}
A natural progression of this work is to train a comprehensive model where guidance dynamically depends on all variables: time, prompt, and the current state. Currently, the primary obstacle to this approach is the observed instability in the gradients of the score network. We leave this issue for future research.

Additionally, a limitation of our proposed guidance optimization method is its assumption of perfect score estimation by the diffusion model. In practice, this assumption is violated, which may inadvertently compound the errors of the estimator.
\begin{table*}[t]
\label{table:edm2}
\caption{Trained guidance perfomance for EDM2 model. Compared to constant $\alpha^{-1}$ guidance, the corresponding trained one ensures slightly higher Classifier~(Inception V3) confidence on the cost of slightly higher FID.} \label{edm2-table}
\begin{center}
\begin{tabular}{cccc}
\textbf{Model Size} & $\alpha$ &$(\downarrow)$ \textbf{FID} & $(\uparrow)$ \textbf{Classifier Confidence} \\
\hline \\
S & 10  &  4.86 (4.14)  & 0.821 (0.817) \\
S & 5   &  7.16 (5.67)  & 0.836 (0.831) \\
S & 2.5 &  10.01 (8.45) & 0.842 (0.841) \\
M & 10  & 4.46 (3.99)  & 0.844 (0.843) \\
M & 5   &  6.62 (5.52)  & 0.835 (0.829) \\
M & 2.5 & 9.13 (8.26)  & 0.844 (0.843) \\
\end{tabular}
\end{center}
\end{table*}
\section{Conclusion}
In this work, we address critical theoretical gaps in the understanding of guidance within diffusion models. Our analysis rigorously establishes that guidance not only enhances alignment with conditioning signals but also crucially preserves the support of the conditional data manifold. Leveraging these foundational insights, we formulate the scheduling of guidance strength as a stochastic optimal control problem and propose practical optimization algorithms based on adjoint state method. This approach facilitates the development of dynamic, sample- and time-dependent guidance policies that can be efficiently trained and deployed. Overall, our work provides a robust theoretical and algorithmic foundation for adaptive guidance, opening new directions for more controlled and effective generation in diffusion models.

\subsubsection*{Acknowledgements}

We thank \href{https://modal.com}{Modal} for providing the computational infrastructure used in our experiments.
IA was supported by the Engineering and Physical Sciences Research Council [grant number EP/T517811/1].
PP and QL are supported by the EPSRC CDT in Modern Statistics and Statistical Machine Learning (EP/S023151/1)
GD was supported by the Engineering and Physical Sciences Research Council [grant number EP/Y018273/1].
JR received funding from the European Research Council (ERC) under the European Union’s Horizon 2020 research and innovation programme (grant agreement No 834175)

\bibliographystyle{apalike}
\bibliography{references.bib}

@misc{stable-diffusion-manual,
  title        = {Guide to Stable Diffusion CFG Scale Parameter},
  author       = {Stability-AI},
  year         = 2025,
}

@article{ho2022classifier,
  title={Classifier-free diffusion guidance},
  author={Ho, Jonathan and Salimans, Tim},
  journal={arXiv preprint arXiv:2207.12598},
  year={2022}
}

@article{chen2015microsoft,
  title={Microsoft COCO Captions: Data Collection and Evaluation Server},
  author={Chen, Xinlei and Zitnick, C Lawrence},
  journal={arXiv preprint arXiv:1504.00325},
  year={2015}
}

@article{bradley2024classifier,
  title={Classifier-free guidance is a predictor-corrector},
  author={Bradley, Arwen and Nakkiran, Preetum},
  journal={arXiv preprint arXiv:2408.09000},
  year={2024}
}

@book{karatzas1998brownian,
  title={Brownian Motion and Stochastic Calculus},
  author={Karatzas, Ioannis and Shreve, Steven E.},
  volume={113},
  series={Graduate Texts in Mathematics},
  year={1998},
  publisher={Springer New York},
}

@inproceedings{szegedy2016rethinking,
  title={Rethinking the inception architecture for computer vision},
  author={Szegedy, Christian and Vanhoucke, Vincent and Ioffe, Sergey and Shlens, Jon and Wojna, Zbigniew},
  booktitle={Proceedings of the IEEE conference on computer vision and pattern recognition},
  pages={2818--2826},
  year={2016}
}

@inproceedings{rombach2022high,
  title={High-Resolution Image Synthesis with Latent Diffusion Models},
  author={Rombach, Robin and Blattmann, Andreas and Lorenz, Dominik and Esser, Patrick and Ommer, Björn},
  booktitle={Proceedings of the IEEE/CVF Conference on Computer Vision and Pattern Recognition (CVPR)},
  year={2022},
}

@inproceedings{Karras2024edm2,
  title     = {Analyzing and Improving the Training Dynamics of Diffusion Models},
  author    = {Tero Karras and Miika Aittala and Jaakko Lehtinen and
               Janne Hellsten and Timo Aila and Samuli Laine},
  booktitle = {Proc. CVPR},
  year      = {2024},
}

@article{skreta2025feynman,
  title={Feynman-kac correctors in diffusion: Annealing, guidance, and product of experts},
  author={Skreta, Marta and Akhound-Sadegh, Tara and Ohanesian, Viktor and Bondesan, Roberto and Aspuru-Guzik, Al{\'a}n and Doucet, Arnaud and Brekelmans, Rob and Tong, Alexander and Neklyudov, Kirill},
  journal={arXiv preprint arXiv:2503.02819},
  year={2025}
}

@inproceedings{imagenet_cvpr09,
  author = {Deng, Jia and Dong, Wei and Socher, Richard and Li, Li-Jia and Li, Kai and Fei-Fei, Li},
  title = {{ImageNet: A Large-Scale Hierarchical Image Database}},
  booktitle = {2009 IEEE Conference on Computer Vision and Pattern Recognition},
  pages = {248--255},
  year = {2009},
  organization = {IEEE}
}

@article{robbins1956,
    author = {Robbins, Herbert E},
    title = {An empirical Bayes approach to statistics},
    journal = {Proceedings of the Third Berkeley
Symposium on Mathematical Statistics and Probability},
    year = {1956}
}

@article{ANDERSON1982313,
title = {Reverse-time diffusion equation models},
journal = {Stochastic Processes and their Applications},
volume = {12},
number = {3},
pages = {313-326},
year = {1982},
doi = {https://doi.org/10.1016/0304-4149(82)90051-5},
author = {Brian D.O. Anderson},
}

@article{ho2020denoising,
  title={Denoising diffusion probabilistic models},
  author={Ho, Jonathan and Jain, Ajay and Abbeel, Pieter},
  journal={Advances in neural information processing systems},
  volume={33},
  pages={6840--6851},
  year={2020}
}

@inproceedings{
song2021scorebased,
title={Score-Based Generative Modeling through Stochastic Differential Equations},
author={Yang Song and Jascha Sohl-Dickstein and Diederik P Kingma and Abhishek Kumar and Stefano Ermon and Ben Poole},
booktitle={International Conference on Learning Representations},
year={2021},
url={https://openreview.net/forum?id=PxTIG12RRHS}
}

@article{chang2023muse,
  title={Muse: Text-to-image generation via masked generative transformers},
  author={Chang, Huiwen and Zhang, Han and Barber, Jarred and Maschinot, AJ and Lezama, Jose and Jiang, Lu and Yang, Ming-Hsuan and Murphy, Kevin and Freeman, William T and Rubinstein, Michael and others},
  journal={arXiv preprint arXiv:2301.00704},
  year={2023}
}

@article{gao2023mdtv2,
  title={Mdtv2: Masked diffusion transformer is a strong image synthesizer},
  author={Gao, Shanghua and Zhou, Pan and Cheng, Ming-Ming and Yan, Shuicheng},
  journal={arXiv preprint arXiv:2303.14389},
  year={2023}
}

@article{kynkaanniemi2024applying,
  title={Applying guidance in a limited interval improves sample and distribution quality in diffusion models},
  author={Kynk{\"a}{\"a}nniemi, Tuomas and Aittala, Miika and Karras, Tero and Laine, Samuli and Aila, Timo and Lehtinen, Jaakko},
  journal={arXiv preprint arXiv:2404.07724},
  year={2024}
}

@article{chidambaram2024does,
  title={What does guidance do? a fine-grained analysis in a simple setting},
  author={Chidambaram, Muthu and Gatmiry, Khashayar and Chen, Sitan and Lee, Holden and Lu, Jianfeng},
  journal={arXiv preprint arXiv:2409.13074},
  year={2024}
}

@inproceedings{sohl2015deep,
  title={Deep unsupervised learning using nonequilibrium thermodynamics},
  author={Sohl-Dickstein, Jascha and Weiss, Eric and Maheswaranathan, Niru and Ganguli, Surya},
  booktitle={International conference on machine learning},
  pages={2256--2265},
  year={2015},
  organization={pmlr}
}

@article{podell2023sdxl,
  title={Sdxl: Improving latent diffusion models for high-resolution image synthesis},
  author={Podell, Dustin and English, Zion and Lacey, Kyle and Blattmann, Andreas and Dockhorn, Tim and M{\"u}ller, Jonas and Penna, Joe and Rombach, Robin},
  journal={arXiv preprint arXiv:2307.01952},
  year={2023}
}

@InProceedings{Li_2024_CVPR,
    author    = {Li, Muyang and Cai, Tianle and Cao, Jiaxin and Zhang, Qinsheng and Cai, Han and Bai, Junjie and Jia, Yangqing and Li, Kai and Han, Song},
    title     = {DistriFusion: Distributed Parallel Inference for High-Resolution Diffusion Models},
    booktitle = {Proceedings of the IEEE/CVF Conference on Computer Vision and Pattern Recognition (CVPR)},
    month     = {June},
    year      = {2024},
    pages     = {7183-7193}
}

@InProceedings{Zhang_2023_ICCV,
    author    = {Zhang, Lvmin and Rao, Anyi and Agrawala, Maneesh},
    title     = {Adding Conditional Control to Text-to-Image Diffusion Models},
    booktitle = {Proceedings of the IEEE/CVF International Conference on Computer Vision (ICCV)},
    month     = {October},
    year      = {2023},
    pages     = {3836-3847}
}

@article{ramesh2022hierarchical,
  title={Hierarchical text-conditional image generation with clip latents},
  author={Ramesh, Aditya and Dhariwal, Prafulla and Nichol, Alex and Chu, Casey and Chen, Mark},
  journal={arXiv preprint arXiv:2204.06125},
  volume={1},
  number={2},
  pages={3},
  year={2022}
}

@article{blattmann2023stable,
  title={Stable video diffusion: Scaling latent video diffusion models to large datasets},
  author={Blattmann, Andreas and Dockhorn, Tim and Kulal, Sumith and Mendelevitch, Daniel and Kilian, Maciej and Lorenz, Dominik and Levi, Yam and English, Zion and Voleti, Vikram and Letts, Adam and others},
  journal={arXiv preprint arXiv:2311.15127},
  year={2023}
}

@article{singer2022make,
  title={Make-a-video: Text-to-video generation without text-video data},
  author={Singer, Uriel and Polyak, Adam and Hayes, Thomas and Yin, Xi and An, Jie and Zhang, Songyang and Hu, Qiyuan and Yang, Harry and Ashual, Oron and Gafni, Oran and others},
  journal={arXiv preprint arXiv:2209.14792},
  year={2022}
}

@article{bao2022equivariant,
  title={Equivariant energy-guided sde for inverse molecular design},
  author={Bao, Fan and Zhao, Min and Hao, Zhongkai and Li, Peiyao and Li, Chongxuan and Zhu, Jun},
  journal={arXiv preprint arXiv:2209.15408},
  year={2022}
}

@article{weiss2023guided,
  title={Guided diffusion for inverse molecular design},
  author={Weiss, Tomer and Mayo Yanes, Eduardo and Chakraborty, Sabyasachi and Cosmo, Luca and Bronstein, Alex M and Gershoni-Poranne, Renana},
  journal={Nature Computational Science},
  volume={3},
  number={10},
  pages={873--882},
  year={2023},
  publisher={Nature Publishing Group US New York}
}

@article{schneuing2024structure,
  title={Structure-based drug design with equivariant diffusion models},
  author={Schneuing, Arne and Harris, Charles and Du, Yuanqi and Didi, Kieran and Jamasb, Arian and Igashov, Ilia and Du, Weitao and Gomes, Carla and Blundell, Tom L and Lio, Pietro and others},
  journal={Nature Computational Science},
  volume={4},
  number={12},
  pages={899--909},
  year={2024},
  publisher={Nature Publishing Group}
}

@article{dhariwal2021diffusion,
  title={Diffusion models beat gans on image synthesis},
  author={Dhariwal, Prafulla and Nichol, Alexander},
  journal={Advances in neural information processing systems},
  volume={34},
  pages={8780--8794},
  year={2021}
}

@article{saharia2022photorealistic,
  title={Photorealistic text-to-image diffusion models with deep language understanding},
  author={Saharia, Chitwan and Chan, William and Saxena, Saurabh and Li, Lala and Whang, Jay and Denton, Emily L and Ghasemipour, Kamyar and Gontijo Lopes, Raphael and Karagol Ayan, Burcu and Salimans, Tim and others},
  journal={Advances in neural information processing systems},
  volume={35},
  pages={36479--36494},
  year={2022}
}

@article{ho2022video,
  title={Video diffusion models},
  author={Ho, Jonathan and Salimans, Tim and Gritsenko, Alexey and Chan, William and Norouzi, Mohammad and Fleet, David J},
  journal={Advances in Neural Information Processing Systems},
  volume={35},
  pages={8633--8646},
  year={2022}
}

@misc{schiff2025simpleguidancemechanismsdiscrete,
      title={Simple Guidance Mechanisms for Discrete Diffusion Models}, 
      author={Yair Schiff and Subham Sekhar Sahoo and Hao Phung and Guanghan Wang and Sam Boshar and Hugo Dalla-torre and Bernardo P. de Almeida and Alexander Rush and Thomas Pierrot and Volodymyr Kuleshov},
      year={2025},
      eprint={2412.10193},
      archivePrefix={arXiv},
      primaryClass={cs.LG},
      url={https://arxiv.org/abs/2412.10193}, 
}

@misc{shen2024rethinkingspatialinconsistencyclassifierfree,
      title={Rethinking the Spatial Inconsistency in Classifier-Free Diffusion Guidance}, 
      author={Dazhong Shen and Guanglu Song and Zeyue Xue and Fu-Yun Wang and Yu Liu},
      year={2024},
      eprint={2404.05384},
      archivePrefix={arXiv},
      primaryClass={cs.CV},
      url={https://arxiv.org/abs/2404.05384}, 
}

@misc{li2025provableefficiencyguidancediffusion,
      title={Provable Efficiency of Guidance in Diffusion Models for General Data Distribution}, 
      author={Gen Li and Yuchen Jiao},
      year={2025},
      eprint={2505.01382},
      archivePrefix={arXiv},
      primaryClass={stat.ML},
      url={https://arxiv.org/abs/2505.01382}, 
}

@misc{domingoenrich2025adjointmatchingfinetuningflow,
      title={Adjoint Matching: Fine-tuning Flow and Diffusion Generative Models with Memoryless Stochastic Optimal Control}, 
      author={Carles Domingo-Enrich and Michal Drozdzal and Brian Karrer and Ricky T. Q. Chen},
      year={2025},
      eprint={2409.08861},
      archivePrefix={arXiv},
      primaryClass={cs.LG},
      url={https://arxiv.org/abs/2409.08861}, 
}

@misc{karras2022elucidatingdesignspacediffusionbased,
      title={Elucidating the Design Space of Diffusion-Based Generative Models}, 
      author={Tero Karras and Miika Aittala and Timo Aila and Samuli Laine},
      year={2022},
      eprint={2206.00364},
      archivePrefix={arXiv},
      primaryClass={cs.CV},
      url={https://arxiv.org/abs/2206.00364}, 
}

@book{fleming2006controlled,
  title={Controlled Markov Processes and Viscosity Solutions},
  author={Fleming, W.H. and Soner, H.M.},
  isbn={9780387260457},
  lccn={2005929857},
  series={Applications of mathematics},
  year={2006},
  publisher={Springer}
}

@book{revuz2013continuous,
  title={Continuous Martingales and Brownian Motion},
  author={Revuz, D. and Yor, M.},
  isbn={9783662217269},
  series={Grundlehren der mathematischen Wissenschaften},
  year={2013},
  publisher={Springer Berlin Heidelberg}
}

@book{karatzas1991brownian,
  title={Brownian Motion and Stochastic Calculus},
  author={Karatzas, I. and Shreve, S.},
  isbn={9780387976556},
  lccn={96167783},
  series={Graduate Texts in Mathematics (113) (Book 113)},
  url={https://books.google.co.uk/books?id=ATNy_Zg3PSsC},
  year={1991},
  publisher={Springer New York}
}

@book{boucheron2013,
    author = {Boucheron, Stéphane and Lugosi, Gábor and Massart, Pascal},
    title = {Concentration Inequalities: A Nonasymptotic Theory of Independence},
    publisher = {Oxford University Press},
    year = {2013},
    month = {02},
    isbn = {9780199535255},
    doi = {10.1093/acprof:oso/9780199535255.001.0001},
    url = {https://doi.org/10.1093/acprof:oso/9780199535255.001.0001},
}
\appendix
\onecolumn
\section{MISSING PROOFS AND CALCULATIONS}
\label{app:proofs}
\subsection{Proof of Lemma 1, Section 3}
\begin{proof}
Consider a forward process
\begin{equation}
    dX_t = - X_tdt + \sqrt{2}dB_t, \quad t \in [0, T].
\end{equation}
Let $p_t$ be the density of the marginal $X_t$ at time $t$, Fokker-Plank equation states
\[
\frac{d}{dt} p_t(x) = \grad \cdot (x p_t) + \Delta p_t = d p_t +  \innerproduct{x}{\grad p_t} + \Delta p_t.
\]
Dividing both parts by $p_t$
\[
\frac{d}{dt} \log p_t(x) = d +  \innerproduct{x}{\grad \log p_t} + \frac{\Delta p_t}{p_t} = d +  \innerproduct{x}{\grad \log p_t} + \Delta \log p_t + \norm{\grad \log p_t}^2
\]
So, substituting $p_t(x)$ and $p_t(x|c)$ we get
\£
\label{eq:d/dtG_t}
\frac{d}{dt}G_t(x) &= \frac{d}{dt}\Bigl[\log p_{T-t}(x|c)-\log p_{T-t}(x)\Bigr] 
\\
&
=
-\innerproduct{x}{\grad G_t(x)} - \Delta G_t(x) + \norm{\grad G_t(x)}^2 - 2\innerproduct{\grad \log p_{T-t}(x|c)}{\grad G_t(x)}. \nonumber
\£

Let $\mu_t^w(x) = x+ 2 \grad \log p_{T-t}(x|c) + 2w\grad G_t(x)$ denote the drift of $Y^w_t$. Applying Ito's lemma, and then substituting~\eqref{eq:d/dtG_t} we finish the proof
\[
dG_t(Y^w_t) &= \square{\frac{d}{dt}G_t(Y^w_t) + \innerproduct{\mu_t^w(Y^w_t)}{\grad G_t(Y^w_t)} + \Delta G_t(Y^w_t)}dt + \sqrt{2}\grad G_t(Y^w_t)\cdot dB_t
\\
&=(1 + 2w_t)\norm{\grad G_t(Y^w_t)}^2dt + \sqrt{2}\grad G_t(Y^w_t)\cdot dB_t.
\]
\end{proof}
\subsection{Proof of Corollary 2, Section 3}
\begin{proof}    
First, we verify that $S_t$ is the stochastic exponential of the martingale $-\sqrt{2}\nabla G_s(Y_s^w) \cdot dB_s$. Recall that 
$$ 
S_t = \exp\left( - G_t + \int_0^t 2w_s \|\nabla G_s\|^2ds\right).$$
Let 
\[
I_t := -G_t + \int_0^t 2w_s\norm{\grad G_s}^2ds,
\]
note that 
\[
d\langle I\rangle_t = d\langle G \rangle_t = 2\norm{\nabla G_t}^2dt. 
\]
By Ito's lemma and Lemma~1 we get
\begin{align*}
 dS_t &= S_t\, dI_t + \frac{1}{2}S_t\, d\langle I\rangle_t = S_t\del{-dG_t + 2w_t\norm{\grad G_t}^2dt} + \frac{1}{2}S_t\cdot 2\norm{\grad G_t}^2dt\\
 &= -S_t\sqrt{2}\grad G_t\cdot dB_t,
 \end{align*}
implying that $S_t$ is indeed the exponent of $-\sqrt{2}\nabla G_s(Y_s^w) \cdot dB_s$.
It is known that stochastic exponent is a supermartingale~\cite[2.28]{karatzas1991brownian} and Doob's inequality~\citep[II.1.15]{revuz2013continuous} for supermartingales states that
\[
\P\del{\sup_{t\in [0,T]} S_{t} \ge \lambda} \le \lambda^{-1}\E S_0 
\]
Substituting~$S_t$ we get
\[
\P\del{\sup_{t\in [0,T]} \frac{p(c)}{p(c|Y^w_t)}\exp\del{\int_0^t 2w_t\norm{\grad G_s}^2ds} \ge \lambda} \le \lambda^{-1}\E \frac{p(c)}{p(c|Y^w_0)}= \lambda^{-1},
\]
which is equivalent to the first statement of Corollary~2. 

Next, we show the second statement.
In general, stochastic exponential is only a local martingale, however, it is a true martingale under Novikov's condition~\cite[5.12]{karatzas1991brownian},
which in our case, means that we need to check whether
\[
    \E\exp \del{\int_0^T \norm{\grad G_t(Y^t_w)}^2 dt} < \infty
\]
As we showed in (17) for $t < T$
\[
\norm{\grad G_t(x)}^2 \le \frac{4e^{2(t-T)}}{(1-e^{2(t-T)})^2}R^2 < \infty,
\]
so $S_t$ is a true martingale on $[0,T)$. We note that so far we are initial condition agnostic, so the same argument works in a more general case when the reference process $Y^w_t$ is substituted with $\bar{Y}_t^w = (Y_t^w|Y_{t_0}^w=x)$ satisfying
\begin{equation*}
    \begin{cases}
        d\bar{Y}_t^w = \square{Y^t_w + 2\grad \log p_{T-t}(\bar{Y}^w_t|c) + 2w\grad G_t(\bar{Y}_t^w) }ds + \sqrt{2}dB_t, & t\in (t_0,T)\\
        \bar{Y}_{t_0}^w = x,
    \end{cases}
\end{equation*}
for $0 \le t_0 < T$. So, the process
\[
\exp\del{-G_t(\bar{Y}_t^w) + \int_{t_0}^t 2w_s\norm{\grad G_s(\bar{Y}_t^w)}^2ds} 
\]
is still a true martingale on $[t_0,T)$ and as result, we get the second part stating that for $t \in [t_0, T)$
    \[
       \E\square{p^{-1}_{T-t}(c|Y^w_{t})\exp\del{\int_{t_0}^{t} 2w\norm{\grad G_s}^2ds}\Big| Y^w_t = x} =  p^{-1}_{T-t_0}(c|x).
    \]
\end{proof}
\subsection{Proof of Theorem 3 and Remark 4, Section 4}

The SDE~(5) governing process $Y^w$ is only well defined on the semi-interval $[0,T)$, so to prove that $Y^w_T \in \supp p(\cdot|c)$ we need to show two statements: (i) $Y^w$ can be continuously extended on $[0,T]$, i.e. $Y^w_T = \lim_{t\rightarrow T} Y^w_t$ exists a.s. (ii) The constructed $Y^w_T$ belongs to $\supp p(\cdot|c)$. We address both problems by the application of Donsker and Varadhan’s variational formula.

We recall that the marginal of the forward process is represented as 
\[
X_t \stackrel{\cD}{=} e^{-t}X_0 + \sqrt{1-e^{-2t}}Z,
\]
where $Z\sim \cN\del{0,\Id_d}$. Therefore, to be consistent in $X_0$ we consider the process $e^{T-t}Y^w_t$ instead and define $Y^w_T = \lim_{t\rightarrow T} e^{T-t}Y^w_t$. Note that this limit (if it exists) is equal to $ \lim_{t\rightarrow T} Y^w_t$, since $\lim_{t\rightarrow T} e^{T-t} = 1$.

Let $d$ be the dimension of  $X_0\in \R^d$. We use $A^c=\R^d\backslash A$ to denote the complement of set $A\subset \R^d$ and $cA=\curly{ca: a\in A}$ for a scalar $c\in \R$.

Let $\Omega := \supp p_0(\cdot|c)$ denotes the support and $\Omega_\varepsilon:=\curly{x\in \R^d: \inf_{\omega\in \Omega}\norm{x-\omega} \le \varepsilon}$ denotes the $\varepsilon$-neighborhood of $\Omega$. 

We prove in full generality, assuming that there are constants $\infty > C_{\min}, C_{\max} > 0$
\[
-\frac{1}{2} + C_{\min} \le w \le C_{\max}
\]
Then, using the same reasoning as in Section 4,
\[
\KL(Y^w_{T-\varepsilon}\| Y^0_{T-\varepsilon}) \le \KL(Y^w_{[0,T-\varepsilon]}\| Y^0_{[0,T-\varepsilon]}) \le  \max\del{C_{\max}, \frac{C^2_{\max}}{2C_{\min}}} \log p^{-1}(c) := C_{\KL} < \infty.
\]  


First we prove the following lemma
\begin{lemma} Under the same conditions as in~Theorem~3, for any $\varepsilon > 0$
\£
\label{eq:week_theorem_3}
   \lim_{t\rightarrow T}\rP\del{e^{T-t}Y^w_t \in \Omega_\varepsilon} = 1.
\£
\end{lemma}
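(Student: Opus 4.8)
The plan is to exploit the uniform $\KL$ bound $\KL(Y^w_{T-\varepsilon}\| Y^0_{T-\varepsilon}) \le C_{\KL} < \infty$ together with the Donsker–Varadhan variational formula to transfer a probability estimate from the unguided process $Y^0$ to the guided process $Y^w$. First I would record the elementary fact that the unguided reverse process is an exact time-reversal of the forward OU process, so that for every $t \in [0,T)$ we have $e^{T-t}Y^0_t \stackrel{\cD}{=} X_0 + \sqrt{e^{2(T-t)}-1}\,Z$ with $X_0 \sim p_0(\cdot|c)$ and $Z \sim \cN(0,\Id_d)$ independent. Since $\supp p_0(\cdot|c) = \Omega$, a short argument (conditioning on $X_0 \in \Omega$ and using that the Gaussian perturbation has vanishing scale $\sqrt{e^{2(T-t)}-1} \to 0$) shows that $\rP(e^{T-t}Y^0_t \in \Omega_\varepsilon) \to 1$ as $t \to T$; more quantitatively, $\rP(e^{T-t}Y^0_t \notin \Omega_\varepsilon) \le \rP(\sqrt{e^{2(T-t)}-1}\,\norm{Z} > \varepsilon)$, which goes to $0$.

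Next I would apply Donsker–Varadhan in the following form: for any event $A$ and any two laws $P \ll Q$,
\[
P(A) \le \frac{\KL(P\|Q) + \log 2}{\log\bigl(1 + Q(A)^{-1}\bigr)} \qquad\text{(equivalently, } Q(A) \ge \exp\!\bigl(-\tfrac{\KL(P\|Q)+\log 2}{P(A)}\bigr) - \text{small}\text{)},
\]
or simply the cleaner bound $P(A) \le \frac{\KL(P\|Q) + 1/e}{\log(1/Q(A))}$ valid when $Q(A) \le 1/e$. Taking $P = \mathrm{Law}(e^{T-t}Y^w_{t})$, $Q = \mathrm{Law}(e^{T-t}Y^0_{t})$ — which are mutually absolutely continuous for $t < T$ by Girsanov, as already used in the excerpt — and $A = \{x : x \notin \Omega_\varepsilon\}$, we get
\[
\rP\bigl(e^{T-t}Y^w_{t} \notin \Omega_\varepsilon\bigr) \le \frac{C_{\KL} + 1/e}{\log\bigl(1/\rP(e^{T-t}Y^0_{t}\notin \Omega_\varepsilon)\bigr)}.
\]
By the first step the denominator tends to $+\infty$ as $t \to T$ while the numerator stays bounded by the uniform constant $C_{\KL}+1/e$, so the right-hand side tends to $0$, which is exactly \eqref{eq:week_theorem_3}. (One must apply this for $t = T-\varepsilon'$ with $\varepsilon' \to 0$; note $\KL(Y^w_{T-\varepsilon'}\|Y^0_{T-\varepsilon'}) \le C_{\KL}$ uniformly in $\varepsilon'$, which is the crucial point making the bound non-vacuous in the limit.)

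The main obstacle is making the Donsker–Varadhan step fully rigorous: one needs the exact inequality relating $P(A)$, $Q(A)$, and $\KL(P\|Q)$ in the correct direction (bounding the guided probability in terms of the unguided one), and one must be careful that it is the \emph{uniform} boundedness of the $\KL$ divergence — independent of how close $t$ is to $T$ — that prevents the estimate from degenerating. The absolute continuity $P \ll Q$ for each fixed $t < T$ is already justified in the excerpt via Girsanov and Novikov's condition, which follows from the bound \eqref{eq:naive_bound_G_t} on $\norm{\grad G_t}^2$ away from $t = T$; I would just cite that. The remaining pieces — the time-reversal identity and the vanishing-Gaussian-scale argument for $Y^0$ — are routine.
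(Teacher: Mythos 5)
Your proposal is correct and follows essentially the same route as the paper: transfer the vanishing of $\rP(e^{T-t}Y^0_t \notin \Omega_\varepsilon)$ — obtained from the exact time-reversal representation $e^{T-t}Y^0_t \stackrel{\cD}{=} X_0 + \sqrt{e^{2(T-t)}-1}\,Z$ — to the guided law via the uniform $\KL$ bound and the Donsker--Varadhan duality. The only difference is presentational: you invoke a pre-packaged data-processing corollary $P(A) \le (\KL(P\|Q) + \mathrm{const})/\log(1/Q(A))$ (your constant $1/e$ should actually be $\log 2$ or $2/e$, but this is immaterial since it stays bounded as $t \to T$), while the paper keeps the free parameter $\alpha$ in $Z = \alpha\ind_{A}$ explicit and sends $t\to T$ then $\alpha\to\infty$; both are the same optimization and give the same conclusion.
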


\begin{proof}    
We recall \cite[Corollary 4.15]{boucheron2013} the duality formula for $\KL$ stating that for two probability distributions $P$ and $Q$
\[
\KL(Q\|P) = \sup_{Z: \E_P e^Z < \infty} \E_Q Z - \log \E_P e^Z.
\]
Substituting $Z = \alpha \ind_{e^{t-T}\Omega^c_\varepsilon}$ for some $\alpha > 0$ and as $Q$ and  $P$ laws of $Y^w_t$ and $Y^0_t$ we get 
    \[
    \KL\del{Y^{w}_t\| Y^0_t} + \log [\rP\del{Y^0_t \in e^{t-T}\Omega_\varepsilon} + e^{\alpha}\rP\del{Y^0_t \not\in e^{t-T}\Omega_\varepsilon}] \ge \rP\del{Y^w_t \not\in e^{t-T}\Omega_\varepsilon}
    \]
So, for any $\alpha > 0$
    \£
    \label{eq:variation_bound}
    \frac{C_{\KL} + \log [1 + e^{\alpha}\rP\del{Y^0_t \not\in e^{t-T}\Omega_\varepsilon}]}{\alpha}
    &\ge
    \frac{\KL\del{Y^{w}_t\| Y^0_t} + \log [\rP\del{Y^0_t \in e^{t-T}\Omega_\varepsilon} + e^{\alpha}\rP\del{Y^0_t \not\in e^{t-T}\Omega_\varepsilon}]}{\alpha} \nonumber
    \\
    &\ge \rP\del{Y^w_t \not\in e^{t-T}\Omega_\varepsilon}.
    \£
    Finally, we note that $Y^0_t$ is a true backward process, so $Y^0_t \stackrel{\cD}{=} e^{t-T}Y^0_T + \sqrt{1-e^{2(t-T)}}Z$, where $Z\sim \cN\del{0,\Id_D}$ and $Y^0_T\sim p(\cdot|c)$, therefore
    \[
    \rP\del{Y^0_t \not\in e^{t-T}\Omega_\varepsilon} = \rP\del{e^{t-T}Y^0_T + \sqrt{1-e^{2(t-T)}} Z \not\in e^{t-T}\Omega_\varepsilon} \le \rP\del{\sqrt{1-e^{2(t-T)}} \norm{Z} \ge e^{t-T}\varepsilon} \rightarrow_{t\rightarrow T} 0.
    \]
    So, taking the limit $t\rightarrow T$ we get that for all $\alpha > 0$
    \[
    \frac{C_{\KL}}{\alpha} \ge \lim_{t\rightarrow T}\rP\del{Y^w_t \not\in e^{t-T}\Omega_\varepsilon}
    \]
By taking $\alpha \rightarrow \infty$
we get \eqref{eq:week_theorem_3}.
\end{proof}

\begin{lemma}
    Under the same conditions as in~Theorem 3, the process 
    $e^{T-t}Y^w_t$ a.s. admits a continuous extension on $[0,T]$.
\end{lemma}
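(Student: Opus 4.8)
The plan is to show that the process $Z_t := e^{T-t}Y^w_t$ is an $L^2$-bounded (in fact Cauchy in $L^2$) continuous semimartingale on $[0,T)$, so that it extends continuously to $t=T$. Write out the SDE for $Z_t$: since $dY^w_t = \bigl[Y^w_t + 2\grad\log p_{T-t}(Y^w_t|c) + 2w\grad G_t(Y^w_t)\bigr]dt + \sqrt2\,dB_t$ and $d(e^{T-t}) = -e^{T-t}dt$, the linear $Y^w_t\,dt$ drift term cancels, leaving
\[
dZ_t = e^{T-t}\bigl[2\grad\log p_{T-t}(Y^w_t|c) + 2w\grad G_t(Y^w_t)\bigr]dt + \sqrt2\,e^{T-t}\,dB_t.
\]
So $Z_t = Z_0 + \int_0^t e^{T-s} b_s\,ds + \sqrt2\int_0^t e^{T-s}\,dB_s$ with $b_s := 2\grad\log p_{T-s}(Y^w_s|c) + 2w\grad G_s(Y^w_s)$. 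The diffusion part $M_t := \sqrt2\int_0^t e^{T-s}dB_s$ is a continuous $L^2$-martingale with bounded quadratic variation ($\int_0^T 2e^{2(T-s)}ds < \infty$), hence converges a.s. and in $L^2$ as $t\to T$. The work is therefore entirely in the drift: I must show $\int_0^T e^{T-s}\norm{b_s}\,ds < \infty$ a.s., which by monotone convergence / Fubini follows if I bound $\expectation\int_0^T e^{T-s}\norm{b_s}\,ds$.

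The two pieces of $b_s$ are handled separately. For $\grad G_s$: by \eqref{eq:naive_bound_G_t}, $\norm{\grad G_s(x)}^2 \le 4e^{2(s-T)}R^2/(1-e^{2(s-T)})^2$, i.e.\ $e^{T-s}\norm{\grad G_s} \le 2R/(1-e^{2(s-T)})$, which blows up like $1/(T-s)$ near $T$ and is \emph{not} integrable pointwise — so a crude bound is insufficient and this is the main obstacle. The fix is to use the $L^2$ control already available from \eqref{eqn:decomp}: since $\expectation\int_0^{T-\varepsilon}(1+2w)\norm{\grad G_s(Y^w_s)}^2 ds \le -\log p(c) = C_{\KL}/\!\max(\cdots)$ is bounded uniformly in $\varepsilon$ (using $1+2w \ge 2C_{\min} > 0$ under the generalized hypothesis of \cref{rmk:full_supp_recovery}, or simply $1+2w\ge 1$ in the setting of \cref{thm:DDPM_support_recovery}), monotone convergence gives $\expectation\int_0^T \norm{\grad G_s(Y^w_s)}^2 ds < \infty$. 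Then by Cauchy--Schwarz,
\[
\expectation\int_0^T e^{T-s}\,\abs{w}\,\norm{\grad G_s(Y^w_s)}\,ds
\le C_{\max}\Bigl(\int_0^T e^{2(T-s)}ds\Bigr)^{1/2}\Bigl(\expectation\int_0^T\norm{\grad G_s(Y^w_s)}^2 ds\Bigr)^{1/2} < \infty.
\]
For $\grad\log p_{T-s}(x|c)$: by Tweedie, $\grad\log p_{T-s}(x|c) = \bigl(e^{-(T-s)}\expectation[X_0|X_{T-s}=x,c] - x\bigr)/(1-e^{-2(T-s)})$, so $e^{T-s}\grad\log p_{T-s}(Y^w_s|c) = \bigl(\expectation[X_0|X_{T-s},c] - e^{T-s}Y^w_s\bigr)/(1-e^{-2(T-s)}) = \bigl(\expectation[X_0|\cdot]-Z_s\bigr)/(1-e^{-2(T-s)})$. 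Here $\norm{\expectation[X_0|\cdot]}\le R$ since $\supp p_0(\cdot|c)\subseteq\supp p_0\subseteq B(0,R)$, and the denominator is bounded below away from $0$ except near $s=T$; near $s=T$ one again gets a $1/(T-s)$-type singularity in the bound. I would handle this the same way — but note $\grad\log p_{T-s}(x|c) = \grad\log p_{T-s}(x) + \grad G_s(x)$, and the combination $b_s = 2\grad\log p_{T-s}(Y^w_s) + 2(1+w)\grad G_s(Y^w_s)$; applying \eqref{eqn:decomp}-type $L^2$ control to $\grad\log p_{T-s}$ directly, or alternatively bounding $\expectation\int_0^{T-\varepsilon}\norm{\grad\log p_{T-s}(Y^w_s)}^2ds$ via the analogous identity for the unconditional field, closes the gap by the same Cauchy--Schwarz argument.

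Once $\int_0^T e^{T-s}\norm{b_s}ds < \infty$ a.s., the drift integral $\int_0^t e^{T-s}b_s\,ds$ converges absolutely as $t\to T$, the martingale part converges, and hence $Z_t = e^{T-t}Y^w_t$ has an a.s.\ continuous extension to $[0,T]$, defining $Y^w_T := \lim_{t\to T}e^{T-t}Y^w_t$. Combined with the previous lemma (\eqref{eq:week_theorem_3}, which gives $\rP(e^{T-t}Y^w_t\in\Omega_\varepsilon)\to 1$), a routine argument — the limit exists a.s.\ and lies in every closed $\Omega_\varepsilon$, hence in $\bigcap_{\varepsilon>0}\Omega_\varepsilon = \Omega = \supp p_0(\cdot|c)$ — will yield \cref{thm:DDPM_support_recovery}. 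The main obstacle, to reiterate, is that pointwise-in-time bounds on the drift are not integrable near the terminal time, so the argument must route through the $L^2$-in-spacetime estimate furnished by the Fokker--Planck/Itô identity \eqref{eqn:decomp} rather than through naive sup-norm bounds.
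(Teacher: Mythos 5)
Your semimartingale decomposition $Z_t=e^{T-t}Y^w_t=Z_0+\int_0^t e^{T-s}b_s\,ds+\sqrt2\int_0^te^{T-s}dB_s$ is correct, the martingale part converges a.s.\ (bounded quadratic variation), and the treatment of the $\nabla G_s$ contribution — pairing the $L^2$ control $\expectation\int_0^T\norm{\nabla G_s(Y^w_s)}^2\,ds<\infty$ from \eqref{eqn:decomp} with Cauchy--Schwarz — is a genuine observation. But the argument has a real gap at the $\nabla\log p_{T-s}(Y^w_s\mid c)$ term, and your proposed patch does not hold. You plan to show $\expectation\int_0^T\norm{\nabla\log p_{T-s}(\cdot\mid c)}^2\,ds<\infty$ and apply Cauchy--Schwarz again, but this $L^2$-in-time quantity is \emph{infinite} in exactly the regime the theorem covers. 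Take $p_0(\cdot\mid c)=\delta_{x_0}$ (singular, allowed since the support is only assumed compact). Then $\nabla\log p_\tau(x\mid c)=-(x-e^{-\tau}x_0)/(1-e^{-2\tau})$, and already for the unguided process $Y^0_t\sim\cN\bigl(e^{-(T-t)}x_0,(1-e^{-2(T-t)})\Id_d\bigr)$ one computes $\expectation\norm{\nabla\log p_{T-t}(Y^0_t\mid c)}^2=d/(1-e^{-2(T-t)})$, whose time integral diverges logarithmically near $t=T$; the bounded-KL relation to $Y^w$ only worsens this. No reallocation of the $e^{T-s}$ weight inside Cauchy--Schwarz fixes it, since $\expectation\norm{\nabla\log p_{T-t}}^2\sim(T-t)^{-1}$ while you must pay $\int_0^T\phi\,ds$ for the other factor. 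The ``analogous identity'' for $\log p_{T-t}(Y^w_t)$ also fails to give a bound: the terminal term $\expectation\log p_0(Y^w_T)$ is $-\infty$ for singular data and the cross term with $\nabla G_t$ has no sign. The deeper trouble is that the singular drift piece is, by Tweedie, the restoring term $\bigl(\expectation[X_0\mid\cdot,c]-Z_s\bigr)/(1-e^{-2(T-s)})$, and its integrability rests on $Z_s\to\expectation[X_0\mid\cdot,c]$ fast enough — which is essentially the conclusion you are trying to prove, so the argument is circular as written.

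The paper sidesteps the drift altogether. It uses the exact representation of the rescaled unguided reverse process as a time-changed Brownian motion, $e^{T-t}Y^0_t=X_0+W_{e^{2(T-t)}-1}$, so continuity up to $t=T$ is automatic for $Y^0$; the reflection principle then delivers explicit Gaussian tail bounds on the oscillations $\sup_{s,t\in[T_i,T_{i+1}]}\norm{e^{T-s}Y^0_s-e^{T-t}Y^0_t}$ over dyadic blocks $T_i=T-2^{-i}$. These deviation bounds are transferred from $Y^0$ to $Y^w$ via the Donsker--Varadhan inequality \eqref{eq:variation_bound}, which only requires the uniform KL bound $\KL(Y^w_{[0,T-\varepsilon]}\| Y^0_{[0,T-\varepsilon]})\le C_{\KL}$, and a union bound over the dyadic blocks gives a.s.\ summability of the oscillations, i.e.\ the Cauchy criterion for the extension. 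The ingredient your proof is missing is thus supplied there by a Brownian modulus-of-continuity estimate plus a change of measure, not by any pointwise or $L^2$ control on the score term; if you want to keep the SDE decomposition, you would at minimum have to replace the Cauchy--Schwarz step by a direct $L^1$ estimate on the drift transferred through the KL bound, which is a rather different argument.
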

\begin{proof}    
We show this by checking Cauchy type condition. We fix $T_i = T-2^{-i}$ and prove that a.s.
\£
\label{eq:telescopic}
\sum_{i=1}^\infty \sup_{s,t\in [T_i,T_{i+1}]}\norm{e^{T-s}Y^w_s -e^{T-t}Y^w_t} < \infty.
\£
Then $Y^w_T$, for example, can be constructed as a telescopic limit
\[
Y^w_T :=e^{1/2}Y^w_{T-1/2} + \sum_{i=1}^\infty (e^{T-T_{i+1}}Y^w_{T_{i+1}}-e^{T-T_{i}}Y^w_{T_{i}}),
\]
and~\eqref{eq:telescopic} will guarantee that the sum converges and it does not depend on the choice of $\curly{T_i}$.

The forward OU process $X_t$ can be represented as a rescaled Wiener process, more precisely
\[
X_{t} = e^{-t}\del{X_0 + W_{e^{2t}-1}}.
\]
So, for $0 \le A < B < \infty$
\[
\sup_{s,t \in [A,B]} \norm{e^tX_{t}-e^sX_{s}} = \sup_{s,t \in [A,B]} \norm{W_{e^{2t}-1}-W_{e^{2s}-1}} \le 2\sup_{t}\norm{ W_{e^{2t}-1}- W_{e^{2A}-1}},
\]
and by the reflection principle for $\delta  < 1/4$
\[
\rP\del{\sup_{s,t \in [A,B]} \norm{e^tX_{t}-e^sX_{s}} \ge 2\sqrt{2d(\log 8d)\del{e^{2B}-e^{2A}}\log \delta^{-1}}} \\
\le \rP\del{2\sup_{t}\norm{ W_{e^{2t}-1}- W_{e^{2A}-1}} \ge 2\sqrt{2d\del{(e^{2B}-1)-(e^{2A}-1)}\log (2d\delta^{-1})}} \le \delta,
\]
where we additionally used  $\log(2d\delta^{-1}) < (\log 8d)\log\delta^{-1}$.
So, for $A, B \le 1/2$
\[
\rP\del{\sup_{s,t \in [A,B]} \norm{e^tX_{t}-e^sX_{s}} \ge 8\sqrt{d(\log 8d)\del{B-A}\log \delta^{-1}}} \le \delta.
\]
As result, by~\eqref{eq:variation_bound}
\£
    \frac{C_{\KL} + \log [1 + e^{\alpha_n}\delta_n]}{\alpha_n}
    \ge \rP\del{\sup_{s,t\in [T_i,T_{i+1}]}\norm{e^{T-s}Y^w_{s} -e^{T-t}Y^w_{t}} \ge 8\sqrt{d(\log 8d)(T_i-T_{i+1}) \log \delta_n^{-1}}}.
\£
We take $\alpha_n = \gamma^{-1} n^2$ and $\delta_n = \exp\del{-\gamma^{-1}n^2}$, so
\[
    \gamma\frac{C_{\KL} + \log 2}{n^2}
    \ge 
    \rP\del{\sup_{s,t\in [T_i,T_{i+1}]}\norm{e^{T-s}Y^w_{s} -e^{T-t}Y^w_{t}} \ge 8\sqrt{d(\log 8d)(T_i-T_{i+1}) \gamma^{-1} n^2}}
\]
Summing over all $T_i$, since $T_{i+1}-T_i = 2^{-(i+1)}$ and
\[
\sum \frac{n}{2^{n/2}} = 4+3\sqrt{2} < 10, \quad \sum \frac{1}{n^2} = \frac{\pi^2}{6},
\]
we get the union bound 
\[
\gamma (C_{\KL} + \log 2)\frac{\pi^2}{6} > \rP\del{\sum_{i}\sup_{s,t\in [T_i,T_{i+1}]}\norm{e^{T-s}Y^w_{s} -e^{T-t}Y^w_{t}} \ge 80\sqrt{\gamma^{-1}} \sqrt{d\log 8d}}.
\]
By taking $\gamma\rightarrow 0$ we show \eqref{eq:telescopic}.
\end{proof}

Since almost sure convergence implies weak convergence and $\Omega_\varepsilon$ is closed
\[
1 = \lim_{t\rightarrow T}\rP\del{Y^w_t \in \Omega_\varepsilon} = \limsup_{t\rightarrow T}\rP\del{Y^w_t \in \Omega_\varepsilon} 
\le
\rP\del{Y^w_T \in \Omega_\varepsilon}.
\]
We finish the proof of~Theorem 3 and Remark 4 by taking $\varepsilon \rightarrow 0$.

\subsection{Omitted Calculations in Section~4}
In this Section we present calculations ommited in Section 4.

We recall that the sensitivity process $Z_t(\tau)$ is defined as
\[
Z_t(\tau) = \frac{\partial Y^w_t}{\partial w_\tau}.
\]
Applying the chain rule to the reward $R(w)$ we obtain
\£
\label{eq: sensitivity of R}
    \frac{\partial R(w)}{\partial w_\tau}
    &= \frac{\partial}{\partial w_\tau}\E\int_0^T (1+2w_t-\alpha w_t^2)\norm{\grad G_t(Y_t^w)}^2dt \nonumber
    \\
    &=\E\int_0^T 2\cdot\delta(\tau-t)(1 - \alpha w_t)\norm{\grad G_t(Y_t^w)}^2 + 2(1+2w_t-\alpha w_t^2)\grad^T G_t(Y_t^w)\grad^2 G_t(Y_t^w)\frac{\partial Y^w_t}{\partial w_\tau} dt \nonumber
    \\
    &=2\E (1 - \alpha w_\tau)\norm{\grad G_t(Y_t^w)}^2 + \expectation\int_\tau^T{\color{blue}{2(1+2w_t-\alpha w_t^2)\grad^T G_t(Y_t^w)\grad^2 G_t(Y_t^w)}}Z_t(\tau) dt 
\£

Next, using the chain rule, we write the differential equation on the sensitivity process
\[
dZ_t(\tau) &= d\frac{\partial Y^w_t}{\partial w_\tau} =  \frac{\partial}{\partial w_\tau}\del{\square{Y^t_w + 2\grad \log p_{T-t}(Y^w_t|c) + 2w_t\grad G_t(Y_t^w) }dt + \sqrt{2}dB_t}
\\
&=\square{Z_t(\tau) + 2\grad^2 \log p_{T-t}(Y^w_t|c)Z_t(\tau) + 2\delta(\tau-t)\grad G_\tau(Y_\tau^w)+ 2w_t\grad^2 G_t(Y_t^w)Z_t(\tau)}dt
\]
implying that it satisfies ODE
\begin{equation}
\begin{cases}
dZ_t(\tau) =\square{\color{red}\Id + 2\grad^2 \log p_{T-t}(Y^w_t|c)+ 2w_t\grad^2 G_t(Y_t^w)}Z_t(\tau)dt,
\\
Z_\tau = 2\grad G_\tau(Y_\tau^w).
\end{cases}
\end{equation}
We then consider an adjoint backward process given by 
\begin{equation}    
\begin{cases}    
d\lambda_t = -\square{\color{red} \underbrace{\Id + 2\grad^2 \log p_{T-t}(Y^w_t|c)+ 2w_t\grad^2 G_t(Y_t^w)}_{:=A_t}}^T\lambda_t dt -{\color{blue}\underbrace{2(1+2w_t-\alpha w_t^2)\grad^T G_t(Y_t^w)\grad^2 G_t(Y_t^w)}_{:=B_t}}dt
\\
\lambda_T = 0
\end{cases}
\end{equation}
Then, integrating by parts, and noting that $\innerproduct{x}{Wy} = \innerproduct{W^Tx}{y}$, we see that 
\[
\langle \lambda_\tau, 2\grad G_\tau(Y_\tau^w) \rangle &= \innerproduct{\lambda_\tau}{Z_\tau} = \innerproduct{\lambda_\tau}{Z_\tau} - \innerproduct{\lambda_T}{Z_T}= -\int_\tau^T \del{\innerproduct{\frac{d}{dt}\lambda_t}{Z_t} + \innerproduct{\lambda_t}{\frac{d}{dt}Z_t}}dt 
\\
&= -\int_\tau^T \del{-\innerproduct{{\color{red} A_t}^T\lambda_t - {\color{blue} B^T_t}}{Z_t} + \innerproduct{\lambda_t}{{\color{red} A_t}Z_t}}dt = \int_\tau^T {\color{blue}{2(1+2w_t-\alpha w_t^2)\grad^T G_t(Y_t^w)\grad^2 G_t(Y_t^w)}} Z_t dt
\]
that simplifies \eqref{eq: sensitivity of R} and results in
\[
\frac{\partial R(w)}{\partial w_\tau}=2\E (1 - \alpha w_\tau)\norm{\grad G_t(Y_t^w)}^2 + 2\langle \lambda_\tau, \grad G_\tau(Y_\tau^w) \rangle
\]

\section{ADDITIONAL EXPERIMENTS \& DETAILS}
\label{app:add-details}
\subsection{Additional Experimental Details}
\label{app:experimental-details}

\paragraph{Model specifications.}
We consider two pretrained diffusion model families: Stable Diffusion v1.5 and EDM2. Stable Diffusion is used in its standard latent-diffusion form, with denoising performed in a $4\times 64\times 64$ latent space corresponding to $512\times 512$ images, using the model's pretrained VAE encoder and decoder without modification. EDM2 is evaluated on pretrained ImageNet checkpoints; the $512\times 512$ models are likewise represented in a $4\times 64\times 64$ latent space, whereas the $64\times 64$ models operate directly in pixel space and therefore do not use a VAE. In all cases, the underlying generative model is frozen and only the scalar guidance schedule is trained.

\subparagraph{Data}
For Stable Diffusion experiments on the ImageNet dataset, conditioning prompts are constructed from the class names using the template ``a photo of \{class name\}'', with underscores replaced by spaces. Training was performed on all $1000$ ImageNet classes, with $20$ holdout prompts fixed to track performance during training. For EDM2, conditioning is performed directly using ImageNet class labels rather than text prompts.

\begin{figure*}[htbp]
    \centering
    \includegraphics[width=0.18\linewidth]{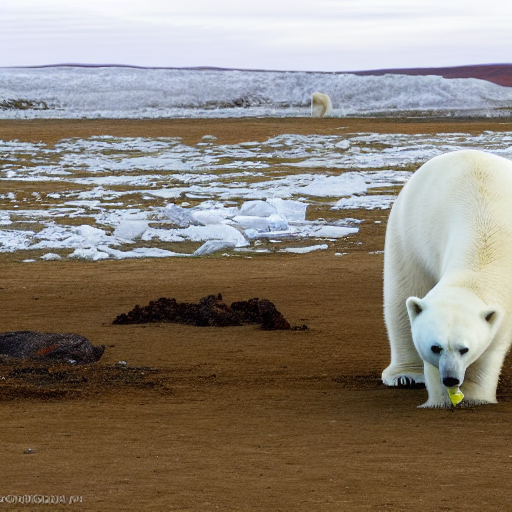}
    \includegraphics[width=0.18\linewidth]{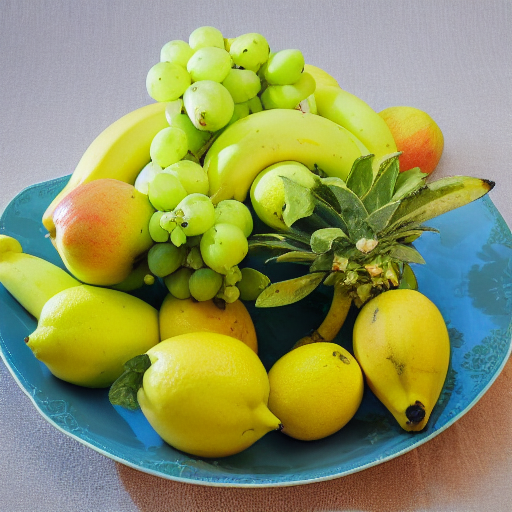}
    \includegraphics[width=0.18\linewidth]{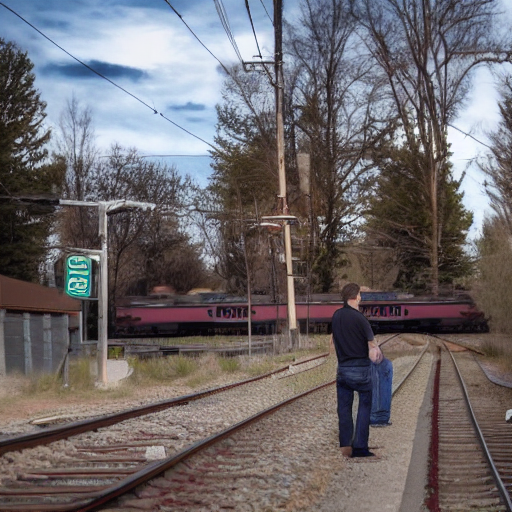}
    \includegraphics[width=0.18\linewidth]{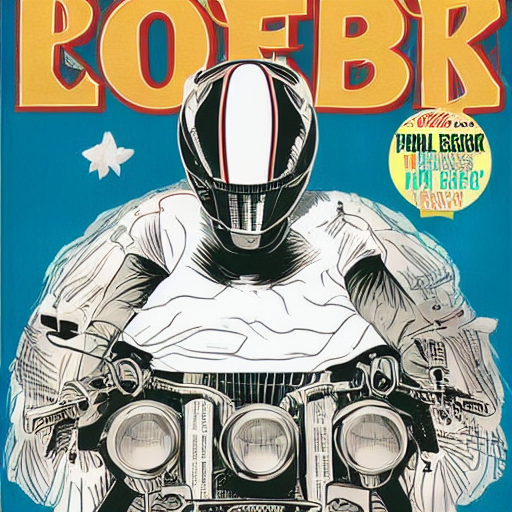}
    \includegraphics[width=0.18\linewidth]{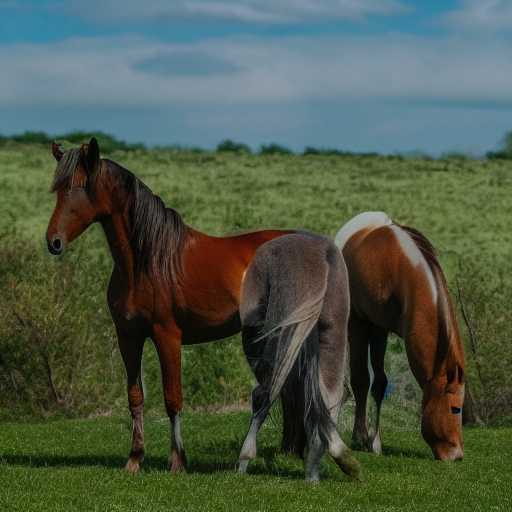}
    \caption{Images sampled using Stable Diffusion with guidance trained on the COCO dataset. Captions are taken from the holdout set. All images sampled using 100 points. The list of corresponding prompts from left to right: 
    "The polar bear is looking at his latest treat.", 
    "A plate of fruit such as bananas, lemons, apples, and oranges.",
    "A man stands between two railroad tracks as a train approaches",
    "Open book with a magazine cover of motorcycle",
    "a couple of horses grazing on some green grass".}
    \label{fig: samples sd}
\end{figure*}

\paragraph{Sampling and guidance schedule parameterization.}
In all experiments, reverse diffusion uses $64$ steps during training and $32$ steps during evaluation. Sampling uses Heun discretization, with an early-stopping cutoff of $t=0.01$ during training and $2\times 10^{-6}$ during evaluation. For Stable Diffusion, the full diffusion process is discretized into $1000$ timesteps, which index the noise schedule of the model; the sampled timesteps are chosen by taking evenly spaced indices from this ordered sequence after applying the early-stopping cutoff. For EDM2, we first construct a dense timestep sequence from the VE noise schedule used in the EDM2 setup, with $\sigma_{\max}=80$, $\sigma_{\min}=0.002$, $\rho=7$, and $1025$ base noise levels, convert it to VP time by SNR matching, and then select the $64$ or $32$ sampling steps by the same evenly spaced indexing procedure.

\paragraph{Guidance parameterization.}
The guidance schedule is parameterized by one scalar parameter for each of the $64$ training timesteps, so the learned schedule consists of $64$ parameters in total, each associated with a specific point on the training-time sampling grid. To ensure that the guidance scale remains positive, each parameter is passed through a softplus transformation before being used at runtime. During evaluation, the sampling grid contains $32$ timesteps rather than $64$; when an evaluation timestep does not exactly coincide with a training-grid point, it is mapped to the nearest training timestep, and the corresponding learned scalar is used.

\paragraph{Training details.}
For Stable Diffusion, the default training configuration uses $20$ epochs, batch size $20$, micro-batch size $5$, Adam, learning rate $0.1$, and gradient clipping at $10.0$. For EDM2, the default training configuration uses $20$ epochs for $512\times512$ model and $25$ epochs for $64\times 64$ model, batch size $100$, micro-batch size $25$, SGD, learning rate $0.01$, and gradient clipping at $10.0$. In both cases, antithetic sampling is enabled during training. Mixed precision is enabled automatically on CUDA devices, using \texttt{bfloat16} when available and otherwise \texttt{float16}.

\paragraph{Evaluation details.}
Evaluation follows the same large-scale class-conditional protocol used in the EDM2 setup: $50{,}000$ generated samples over $1000$ ImageNet classes, with $50$ images per class and random seed $0$. We report FID, FD-DINOv2, Inception accuracy, and Inception confidence. For each setting, the learned timestep-dependent schedule corresponding to regularization parameter $\alpha$ is compared against a constant-guidance baseline with guidance strength $\alpha^{-1}$.

\begin{figure*}[htbp]
    \centering
    \includegraphics[width=0.33\linewidth]{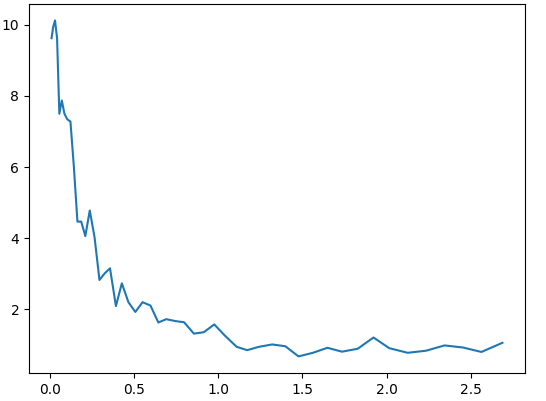}
    \includegraphics[width=0.6\linewidth]{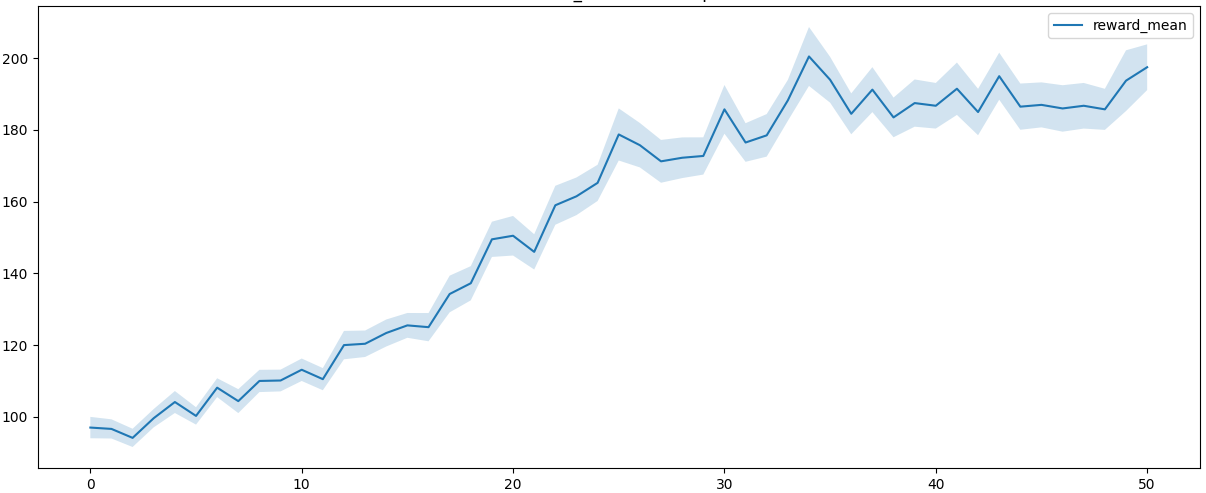}
    \caption{Guidance optimization for COCO dataset. Left: Guidance learned in 50 training steps. Right: Change of the mean reward $R(w_\theta)$ over training steps together with $\pm1$ standard deviation.}
    \label{fig: training}
\end{figure*}

\subsection{COCO experiment}
We additionally conducted a small-scale experiment on the COCO-captions~\citep{chen2015microsoft}(CC BY 4.0) dataset. We trained time-dependent guidance for Stable Diffusion~1.5~\citep{rombach2022high}(CreativeML-OpenRail-M license). We set the regularization parameter to be equal to $\alpha=0.1$ and used the first $576$ prompts from the COCO-captions dataset to fine-tune guidance, while the next $32$ prompts were used as a holdout set to evaluate the performance. 

We trained the time-dependent guidance strength $w_t(\theta)$ for 50 gradient steps on the time interval $[0.01, 2.68]$, discretized into 50 points, by parameterizing $\theta$ with a small MLP with $\exp$ applied to the output. The batch size was chosen to be equal to 72, and the number of trajectories was 4 per prompt.

To improve stability, we have used an early stopping time $\delta=0.01$ and applied quantile clipping while computing $dR/dw_t$.  Finally, following~\cite{domingoenrich2025adjointmatchingfinetuningflow}, we dropped the $\grad^2G_t \lambda_t$ term during simulation of the adjoint process.

\cref{fig: training} contains the learned guidance strength and the reward curve, and \cref{fig: samples sd} are images sampled from the trained guided diffusion.

\newpage
\section{PSEUDOCODE}
\label{sec:discrete-soc-pseudocode}

\begin{algorithm}[htbp]
\caption{Vector Jacobian Product (VJP)}
\label{alg:vjp}
\begin{algorithmic}[1]
\Require Differentiable vector field $f$, point $y$, vector $v$
\State $\phi(y)\gets \langle f(y), v\rangle$
\State \Return $\nabla_y \phi(y)=\bigl(\nabla_y f(y)\bigr)^\top v$
\end{algorithmic}
\end{algorithm}
\begin{algorithm}[htbp]
\caption{Discrete SOC Optimization via Adjoint Method}
\label{alg:discrete-soc}
\begin{algorithmic}[1]
\Require Parameters $\theta$, time grid $0=t_0<t_1<\cdots<t_N=T$, learning rate $\eta$
\Repeat
    \State Sample a minibatch of conditions $\{c\}$ and initialize the reverse process.
    \State \textbf{Forward Pass:} Simulate the controlled trajectory $\{Y_k\}_{k=0}^N$ conditioned on $c$ on the grid using $w_\theta$.
    \State Evaluate and cache discrete guidance values $w_k \gets w_\theta(t_k,Y_k,c)$ and score terms $\nabla G_k \gets \nabla G_{t_k}(Y_k)$.
    \State \textbf{Backward Pass:} Initialize the terminal adjoint state $\lambda_N \gets 0$.
    \For{$k=N-1, N-2, \dots, 0$}
        \State $\Delta t_k \gets t_{k+1}-t_k$
        \State Define the drift function $f_k(y) \gets y + 2\nabla \log p_{T-t_k}(y \mid c) + 2w_k \nabla G_{t_k}(y)$
        \State Compute $A_k^\top \lambda_{k+1}$ using \cref{alg:vjp}:
        \State \quad $A_k^\top \lambda_{k+1} \gets \operatorname{VJP}(f_k, Y_k, \lambda_{k+1})$
        \State Compute the source term $B_k$ of the adjoint ODE using AutoDiff:
        \State \quad $B_k \gets (1+2w_k-\alpha w_k^2)\grad\norm{\nabla G_{t_k}(Y_{t_k}}^2$
        \State Update the adjoint state via Euler integration:
        \State \quad $\lambda_k \gets \lambda_{k+1} + \Delta t_k (B_k + A_k^\top \lambda_{k+1})$
    \EndFor
    \State \textbf{Gradient Computation:}
    \For{$k=0, 1, \dots, N$}
        \State Compute the gradient of the objective w.r.t. the discrete guidance $w_k$:
        \State \quad $g_k \gets 2(1-\alpha w_k)\|\nabla G_k\|^2 + 2\langle \lambda_k, \nabla G_k \rangle$
    \EndFor
    \State Compute the full parameter gradient via the chain rule:
    \State \quad $\widehat{\nabla_\theta R} \gets \sum_{k=0}^{N} g_k \frac{\partial w_\theta(t_k,Y_k,c)}{\partial \theta} \Delta t_k$
    \State Perform a gradient ascent step:
    \State \quad $\theta \gets \theta + \eta \widehat{\nabla_\theta R}$
\Until{convergence}
\end{algorithmic}
\end{algorithm}
\end{document}